\definecolor{Gray}{gray}{0.9}
\newtheorem{theorem}{Theorem}[section]
\newtheorem{proposition}[theorem]{Proposition}
\newtheorem{remark}[theorem]{Remark}
\newtheorem{lemma}[theorem]{Lemma}
\title{The Shrinkage-Delinkage Trade-off: An Analysis of \\ Factorized Gaussian Approximations for Variational Inference}
\author[1]{\href{mailto:<cmargossian@flatironinstitute.org>?Subject=Your UAI 2023 paper}{Charles C. Margossian}{}}
\author[1]{Lawrence K. Saul}
\affil[1]{%
    Center for Computational Mathematics\\
    Flatiron Institute\\
    New York, NY, USA
}
\begin{document}
  
\maketitle


\begin{abstract}
When factorized approximations are used for variational inference (VI), they tend to underestimate the uncertainty---as measured in various ways---of the distributions they are meant to approximate.
We consider two popular ways to measure the uncertainty deficit of VI: (i) the degree to which it underestimates the componentwise variance, and (ii) the degree to which it underestimates the entropy.
To better understand these effects, and the relationship between them, we examine an informative setting where they can be explicitly (and elegantly) analyzed:
the approximation of a Gaussian,~$p$, with a dense covariance matrix, by a Gaussian,~$q$, with a diagonal covariance matrix.
We prove that $q$ always underestimates both the componentwise variance and the entropy of $p$, \textit{though not necessarily to the same degree}.
Moreover we demonstrate that the entropy of $q$ is determined by the trade-off of two competing forces: it is decreased by the shrinkage of its componentwise variances (our first measure of uncertainty) but it is increased by the factorized approximation which delinks the nodes in the graphical model of $p$.
We study various manifestations of this trade-off, notably one where, as the dimension of the problem grows, the per-component entropy gap between $p$ and $q$ becomes vanishingly small even though $q$ underestimates every componentwise variance by a constant multiplicative factor.
We also use the shrinkage-delinkage trade-off to bound the entropy gap in terms of the problem dimension and the condition 
number of the correlation matrix of $p$.
Finally we present empirical results on both Gaussian and non-Gaussian targets, the former to validate our analysis and the latter to explore its limitations.
\end{abstract}


\section{Introduction}

Variational inference (VI) is a popular methodology for approximate Bayesian inference \citep{Jordan:1999, Wainwright:2008, Blei:2017}.
Given a target distribution,~$p$, VI searches for a tractable distribution, $q \in \mathcal Q$, that minimizes the Kullback-Leibler (KL) divergence to~$p$. 
A common choice for $\mathcal Q$ is to use a family of factorized distributions.
The KL-divergence can then be optimized in a scalable manner for high-dimensional distributions \citep{Wainwright:2008}, which is crucial, for instance, to train models such as variational auto-encoders over large data sets \citep{Kingma:2013}.

Factorized VI has its roots in the mean-field approximations to certain Gibbs distributions from statistical physics~\citep{Parisi:1988,Mackay:2003}. 
In this approach, the approximating distribution is modeled as
\begin{equation}
  q({\bf z}) = \prod_{i = 1}^n q(z_i).
\label{eq:FVI}
\end{equation}
In most applications, the target distribution $p({\bf z})$ does \underline{not} factorize.
By its very nature, factorized VI cannot estimate the correlations between different elements of ${\bf z}$.
A more subtle shortcoming of factorized VI is that it also fails to correctly estimate the marginal distributions, $p(z_i)$.
This failure typically manifests as an approximation $q$ with an uncertainty deficit relative to $p$, a phenomenon which has been studied both empirically and theoretically~\citep[e.g][]{Mackay:2003, Wang:2005, Bishop:2006, Turner:2011, Blei:2017, Giordano:2018}.
There exists several measures of uncertainty, and we focus on two: (i) the componentwise variance and (ii) the entropy.
The componentwise variance plays a crucial role in Bayesian modeling, especially when estimating the posterior distribution over interpretable variables.
Meanwhile the entropy provides a multivariate notion of uncertainty and, in statistical physics, can be linked to the free energy, a quantity of interest for many problems.

Intuitively, we expect factorized VI to shrink the variance of~$q$ to minimize its overlap with the tails of $p$.
It is less clear how it should affect the entropy of $q$: on the one hand, this entropy is decreased by any shrinkage in the variance, but it is increased
by the factorized approximation, which delinks the nodes in the full-covariance graphical model of $p$.
Hence entropy is driven by a trade-off between two competing forces. We call this the \textit{shrinkage-delinkage trade-off}.
This trade-off hints that the adequacy of factorized VI may depend on the way we elect to measure its uncertainty deficit.

The goal of this paper is to understand the uncertainty deficit of factorized VI in the most informative setting where it can be rigorously analyzed.
To this end, we study the special case where $p$ is a Gaussian distribution over $\mathbb R^n$ with a full covariance matrix and $q$ is a Gaussian distribution over~$\mathbb R^n$ with a diagonal covariance matrix.
This choice of $q$ is natural when $p$ is a multivariate distribution over ${\bf z} \in \mathbb R^n$, and leads to factorized Gaussian variational inference (FG-VI)---a popular method among practionners due notably to ``black box'' implementations such as automatic differentiation variational inference (ADVI) \citep{Kucukelbir:2017}.
Our paper expands on previous analyses of FG-VI \citep{Bishop:2006, Turner:2011} in many ways, but perhaps most significantly by identifying---and elucidating---the shrinkage-delinkage trade-off of factorized VI, which in the considered setting can be written explicitly.

Our analysis is grounded in two fundamental inequalities. First we show that if $p$ is multivariate Gaussian, and if~$q$ is the distribution (optimally) estimated by FG-VI, then
\begin{equation}
\text{Var}_q(z_i) \le \text{Var}_p(z_i). 
\label{eq:ineq1}
\end{equation}
Second, under the same assumptions, we show~that
\begin{equation}
{\cal H}(q) \le {\cal H}(p),
\label{eq:ineq2}
\end{equation}
where ${\cal H}(\cdot)$ denotes the entropy. 
This second inequality, relating the entropies of $p$ and $q$, formalizes an observation~\citep{Mackay:2003,Bishop:2006} that $q$ tends to be more ``compact'' than~$p$.
Our proofs of these inequalities hold generally for Gaussian distributions over $\mathbb{R}^n$; to the best of our knowledge, they are more direct and more general than previous demonstrations.
While both inequalities reveal an uncertainty deficit, we will see that the two notions of uncertainty are not equivalent.
Indeed, we provide one example where $q$ underestimates each componentwise variance by a constant multiplicative factor, but the per-component entropy gap between $p$ and $q$ can be arbitrarily small.
This discrepancy arises because the entropy gap in FG-VI is in fact \textit{equal} to the KL divergence minimized by FG-VI when it targets a multivariate Gaussian.
But, as we will see, this choice of objective function can harm the estimation of marginal variances.


The inequalities in eq.~(\ref{eq:ineq1}--\ref{eq:ineq2}) anchor our subsequent analysis. As shown in Figure~\ref{fig:shrinkage}, the amount of shrinkage in FG-VI depends in general on the number of components of~${\bf z}\in\mathbb{R}^n$ as well as the degree of correlation between these components. 
With this motivation, we derive an upper bound on the entropy gap in eq.~(\ref{eq:ineq2}) in terms of the problem dimensionality,~$n$, and the condition number of the true correlation matrix.

Finally we examine the relevance for some of our findings when FG-VI is applied to non-Gaussian target distributions. For these experiments, we draw on several examples from the Bayesian literature.
We find that, while the variance shrinkage (\ref{eq:ineq1}) does not hold systematically, it holds on average in the considered examples.
We do not have a reliable method to empirically estimate the entropy, but make an argument that eq.~(\ref{eq:ineq2}) may hold in the studied examples.

Our results build on those of many previous studies. \citet{Mackay:2003}, \citet{Bishop:2006}, \citet{Turner:2011}, and \citet{Blei:2017} all use a two-dimensional Gaussian to illustrate that the approximations from VI are more ``compact'' than the distributions they target. We formalize this observation in the general $n$-dimensional setting, while highlighting the difference between componentwise variance and entropy as measures of uncertainty---a difference that becomes more critical in high-dimensional settings.
Experiments on non-Gaussian models also suggest a more nuanced picture, showing for instance that FG-VI does not always underestimate every componentwise variance, though in the studied examples variance shrinkage holds \textit{on average}. 
Previous studies have also examined other measure of uncertainty, such as the frequentist intervals obtained by variational Bayes estimators \citep[e.g.][]{Wang:2005}. Finally, many have been motivated by the uncertainty deficit of factorized VI to develop new methods for inference. These include post-hoc corrections of variational approximations \citep{Giordano:2018} or, for certain models, careful decompositions of $p$ using conditional distributions to justify the assumption of factorization \citep{Agrawal:2021}.

The code for all results and figures is available on \href{https://github.com/charlesm93/variance-delinkage}{GitHub}.

\begin{figure*}
    \centering
    \includegraphics[width = 6in]{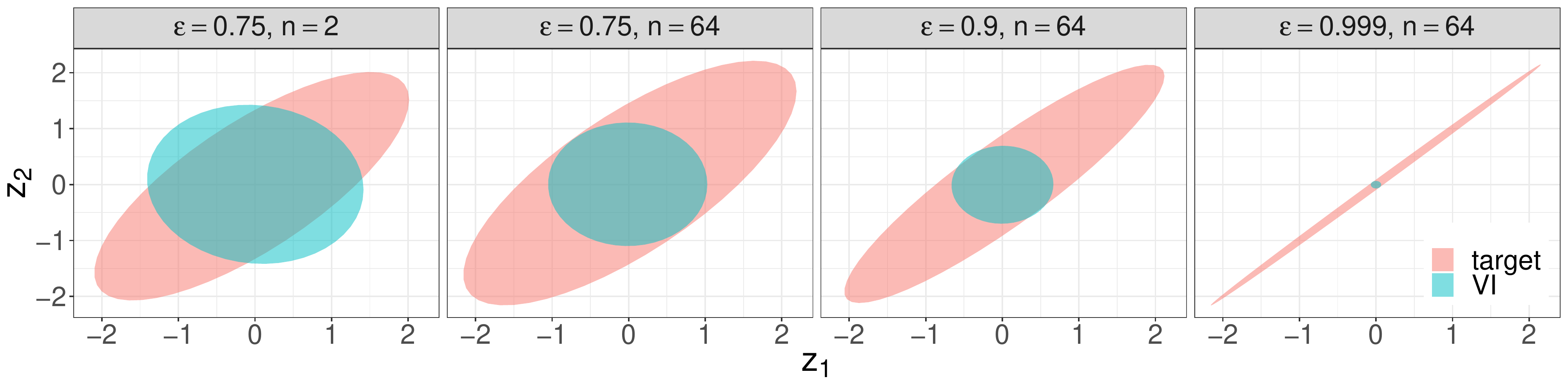}
    \caption{\textit{FG-VI's approximation of a multivariate Gaussian whose correlation matrix has constant off-diagonal terms. FG-VI's variance shrinkage grows with both increasing dimensionality ($n$) and correlation ($\varepsilon$). For $n=64$, the distributions are projected onto their first two coordinates.
    Despite what the picture suggests, the entropy gap between the approximation and the target is actually quite small (Section~\ref{sec:corr}).
    In this sense, the lower-dimensional projection is misleading.
    }}
    \label{fig:shrinkage}
\end{figure*}


\section{Preliminaries}

We analyze FG-VI in the setting where $p(\mathbf{z})$ is multivariate Gaussian with mean $\boldsymbol\mu\in\mathbb{R}^n$ and covariance $\boldsymbol\Sigma\in\mathbb{R}^{n\times n}$. In this setting FG-VI has a particularly simple solution. (An earlier statement of this solution can be found in \citet{Turner:2011}.)

\begin{proposition} \label{prop:solution}
 Let $q(\mathbf{z})$ be multivariate Gaussian with mean $\boldsymbol\nu$ and diagonal covariance~$\boldsymbol\Psi$.
 Then the variational parameters minimizing $\text{KL}(q||p)$ are given by $\boldsymbol\nu=\boldsymbol\mu$ and
\begin{equation}
    \Psi_{ii} = \frac{1}{\Sigma^{-1}_{ii}},
\label{eq:Psi}
\end{equation}
where the denominator $\Sigma_{ii}^{-1}$ denotes a diagonal element of the matrix inverse~$\boldsymbol\Sigma^{-1}$.
\end{proposition}
\begin{proof}
The variational parameters $\boldsymbol\nu$ and $\boldsymbol\Psi$ are estimated by minimizing the KL-divergence
\begin{equation}
  \text{KL}(q || p) = \mathbb E_q[\log q({\bf z})] - \mathbb E_q[\log p({\bf z})],
  \label{eq:KL}
\end{equation}
where each expectation is taken with respect to the measure~$q$. Note that only the second term in eq.~(\ref{eq:KL}) depends on the variational mean $\boldsymbol\nu$, and it is given by
\begin{equation}
-\mathbb E_q[\log p({\bf z})] = \tfrac{1}{2}(\boldsymbol\nu\!-\!\boldsymbol\mu)^\top\boldsymbol\Sigma^{-1}(\boldsymbol\nu\!-\boldsymbol\mu)\, +\, \ldots
\end{equation}
where the ellipses indicate terms that do not depend on~$\boldsymbol\nu$. By minimizing this expression, it follows at once that $\boldsymbol\nu=\boldsymbol\mu$. With this substitution, eq.~(\ref{eq:KL}) simplifies to
\begin{equation}
 \text{KL}(q || p) = \tfrac{1}{2}\left[\text{trace}\big(\boldsymbol\Psi\boldsymbol\Sigma^{-1}\big) - \log\big|\boldsymbol\Psi\boldsymbol\Sigma^{-1}\big| - n\right],
\label{eq:KL-gaussian}
\end{equation}
and the result in eq.~(\ref{eq:Psi}) follows by minimizing the above expression with respect to the diagonal elements of $\boldsymbol\Psi$.
\end{proof}

In sections 2, 3, and 4 of the paper, we assume that $q$ is the factorized Gaussian distribution whose variances are given by eq.~(\ref{eq:Psi}). {\it We emphasize in general that} $\Psi_{ii}\neq\Sigma_{ii}$. However, it is true that $\boldsymbol\Psi=\boldsymbol\Sigma$ when $\boldsymbol\Sigma$ is diagonal.

Many of our results will not be expressed directly in terms of $\boldsymbol\Sigma$ and $\boldsymbol\Psi$, but in terms of two related (but dimensionless) matrices. The first is the {\it correlation matrix}~$\mathbf{C}$ with elements
\begin{equation}
C_{ij} = \frac{\Sigma_{ij}}{\sqrt{\Sigma_{ii}\Sigma_{jj}}}.
\label{eq:C}
\end{equation}
Note that $C_{ii}=1$, a simple fact that we will often exploit, and also that $\mathbf{C}$ reduces to the identity matrix when $\boldsymbol\Sigma$ is diagonal.
At the other extreme, we may consider the case where all the off-diagonal elements of $\mathbf{C}$ are equal to some constant $\varepsilon\!>\!0$. This is explored visually in Figure.~(\ref{fig:shrinkage}). In appendix A we show that $\Psi_{ii}\rightarrow 0$ as $\varepsilon\rightarrow 1$ for fixed $n$, and that $\Psi_{ii}\rightarrow (1\!-\!\varepsilon)\Sigma_{ii}$ as $n\rightarrow\infty$ for fixed~$\varepsilon$. Note that FG-VI underestimates the variance in both limits.

In addition to the correlation matrix, we also define the diagonal \textit{shrinkage} matrix~$\mathbf{S}$ with dimensionless entries
\begin{equation}
    S_{ii} = \frac{\Sigma_{ii}}{\Psi_{ii}} = \Sigma_{ii}\,\Sigma^{-1}_{ii}.
    \label{eq:S}
\end{equation}

We will use the matrices $\mathbf{C}$ and $\mathbf{S}$ to analyze how FG-VI underestimates the uncertainty of $p$. The uncertainty in axis-aligned directions is measured by the variances $\Sigma_{ii}$, but a multivariate measure of uncertainty is provided by the entropy
\begin{equation}
  \mathcal H(p) = - \mathbb E_p \log p({\bf z}).
\label{eq:entropy}
\end{equation}
To what extent does FG-VI underestimate this entropy? As shown next, the answer is very naturally expressed in terms of the correlation matrix $\mathbf{C}$ and the shrinkage matrix $\mathbf{S}$.

\begin{proposition} 
Let $p$ and $q$ be defined as above. Then their difference in entropy is given by
    \begin{equation}
      \mathcal H(p) - \mathcal H(q)
      = \tfrac{1}{2} \log |\mathbf{S}| - \tfrac{1}{2} \log |\mathbf{C}|^{-1}.
      \label{eq:entropy-loss}
    \end{equation}
\end{proposition}
\begin{proof}
A standard calculation for multivariate Gaussian distributions~\citep{Cover:2006} gives 
\mbox{$\mathcal H(p) = \frac{1}{2} \log |\Sigma|(2\pi e)^n$}, and
an analogous result holds for~$\mathcal H(q)$. Let $\Delta\mathcal H = \mathcal H(p)-\mathcal H(q)$. Then we see that
\begin{equation}
  \Delta\mathcal H 
    = \tfrac{1}{2} \log|\mathbf{\Sigma}| - \tfrac{1}{2}|\log{\mathbf{\Psi}}| 
    = \tfrac{1}{2}\log \left|\mathbf{\Psi}^{-\frac{1}{2}} \mathbf{\Sigma} \mathbf{\Psi}^{-\frac{1}{2}}\right|.
    \label{eq:entropy-loss2}
  \end{equation}
Now from the definitions in eqs.~(\ref{eq:C}--\ref{eq:S}), it can be verified by direct substitution that
    $\boldsymbol\Psi^{-\frac{1}{2}} \boldsymbol\Sigma\, \boldsymbol\Psi^{-\frac{1}{2}} = 
    \mathbf{S}^\frac{1}{2}\mathbf{C}\,\mathbf{S}^\frac{1}{2}$.
It follows from the basic properties of determinants that
\begin{equation*}
  \Delta\mathcal H 
  = \tfrac{1}{2} \log \big|\mathbf{S}^\frac{1}{2} \mathbf{C} \mathbf{S}^\frac{1}{2}\big| = 
  \tfrac{1}{2}\log|\mathbf{S}| - \tfrac{1}{2}\log|\mathbf{C}|^{-1}.
\end{equation*}
\end{proof}

\section{\mbox{\!\!Shrinkage-Delinkage Trade-off}} \label{sec:trade-off}

In this section we prove that FG-VI systematically underestimates the variance and entropy of a multivariate Gaussian distribution. We will see, however, that a large shrinkage in {\it all} componentwise variances does not imply a correspondingly large shrinkage in the entropy.

\subsection{Fundamental inequalities}

  %
\begin{theorem}[Variance shrinkage]
\label{thm:shrinkage-variance}
The solution for FG-VI in eq.~(\ref{eq:Psi}) underestimates the variance; that is,
\begin{equation}
    \Psi_{ii} \le \Sigma_{ii},
\label{eq:variance-shrinkage}
\end{equation}
and the inequality is strict for some component of the variance (i.e., $\Psi_{ii}<\Sigma_{ii}$) if $\mathbf{\Sigma}$ is not purely diagonal.
\end{theorem}
\begin{proof}
Let $\mathbf{C}$ denote the correlation matrix in eq.~(\ref{eq:C}).
It can be verified by direct calculation that
\begin{equation}
 C^{-1}_{ij} = \Sigma^{-1}_{ij}\sqrt{\Sigma_{ii}\Sigma_{jj}}.
 \label{eq:invC}
\end{equation}
As further notation, let $\lambda_1,\ldots,\lambda_n$ denote the eigenvalues of $\mathbf{C}$, and
let $\mathbf{e}_i$ denote the unit vector along the $i^{\rm th}$ axis. Then from the solution in eq.~(\ref{eq:Psi}), it follows that
\begin{eqnarray}
  \frac{\Sigma_{ii}}{\Psi_{ii}}
    &=& C^{-1}_{ii}, \label{eq:Sii} \\
    &=& C^{-1}_{ii} + C_{ii} - 1 \\
    &=& \mathbf{e}_i^\top(\mathbf{C}^{-1}\! + \mathbf{C}\,)\,\mathbf{e}_i - 1, \\
    &\geq& \min_{\|\mathbf{e}\|=1} [\mathbf{e}^\top(\mathbf{C}^{-1}\! + \mathbf{C}\,)\,\mathbf{e} - 1], 
    \label{eq:eigen-ineq} \\
    &=& \min_i\, (\lambda_i^{-1} + \lambda_i - 1), \\
    &\geq& \min_{\lambda>0}\, (\lambda^{-1} + \lambda - 1) = 1, 
\end{eqnarray}
where in the last step we have used the fact that the correlation matrix $\mathbf{C}$ has strictly positive eigenvalues. This proves eq.~(\ref{eq:variance-shrinkage}). Now suppose that $\mathbf{\Sigma}$ is not purely diagonal. Then~$\mathbf{C}$ is also not diagonal; hence there must be some unit vector~$\mathbf{e}_i$ that is not an eigenvector of $\mathbf{C}$. In this case the inequality in eq.~(\ref{eq:eigen-ineq}) is strict, showing that $\Sigma_{ii}>\Psi_{ii}$.
\end{proof}

Next we examine the difference in entropy given by eq.~(\ref{eq:entropy-loss}). First we show that this difference is determined by the trade-off of competing entropic forces.

\begin{theorem}[The Shrinkage-Delinkage Tradeoff]
Consider the entropy difference in eq.~(\ref{eq:entropy-loss}) from FG-VI:
     \begin{equation*}
        \mathcal H(p) - \mathcal H(q) = \tfrac{1}{2} \log|\mathbf{S}| - \tfrac{1}{2} \log |\mathbf{C}|^{-1}.
      \end{equation*}
      Both terms in this difference are nonnegative: that is,
      \begin{align}
      \log |\mathbf{S}|\ &\geq\ 0 \label{eq:logS},\\
      \log \frac{1}{|\mathbf{C}|} &\geq\ 0. \label{eq:logC}
      \end{align}
      \label{thm:trade-off}
  \end{theorem}

\vspace{-3ex}
Before proving the theorem we consider the meaning of these inequalities. Conceptually, the first inequality shows that any shrinkage of variances (from Theorem~\ref{thm:shrinkage-variance}) reduces the entropy of $q$ and thus contributes to a larger difference in eq.~(\ref{eq:entropy-loss}). The second inequality shows that the factorization of $q$ acts as a counterbalance to this effect: the entropy of~$p$ is necessarily reduced by the presence of correlations, but such correlations cannot be modeled by~$q$. Thus the factorization of $q$ must (to some extent) oppose the entropy difference in eq.~(\ref{eq:entropy-loss}), and the net difference is determined by the trade-off of these forces. Visually the factorization of $q$ is represented by the delinkage of nodes in the full-covariance graphical model for $p$. This is the essence of the {\it shrinkage-delinkage}~tradeoff for FG-VI.

\begin{proof}
The bound on $\log|\mathbf{S}|$ in eq.~(\ref{eq:logS}) follows at once from Theorem~\ref{thm:shrinkage-variance}:
  \begin{equation}
    \log|\mathbf{S}| = \sum_{i = 1}^n \log \frac{\Sigma_{ii}}{\Psi_{ii}} \ge 0.
  \end{equation}
As before, let $\lambda_1,\ldots,\lambda_n$ denote the eigenvalues of $\mathbf{C}$ so that $\log|\mathbf{C}| = \sum_i \log\lambda_i$. From Jensen's inequality, we~have:
\begin{equation}
\sum_{i=1}^n \log \lambda_i
    \leq n\log\bigg[\tfrac{1}{n}\sum_{i=1}^n\lambda_i\bigg]
    = n\log\tfrac{1}{n}\,\text{trace}(\mathbf{C}) = 0,
\end{equation}
which proves eq.~(\ref{eq:logC}).
\end{proof}
To prove that $q$ underestimates the entropy of $p$, we need the following result which is important in its own right.

\begin{proposition}
The entropy gap between $p$ and $q$ is equal to the KL divergence minimized by FG-VI:
    \begin{equation}
    \label{eq:gapKL}
        \mathcal H(p) - \mathcal H(q) = {\rm KL}(q||p). 
    \end{equation}
\end{proposition}
\begin{proof}
    The identity follows by substituting the solution from eq.~(\ref{eq:Psi}) into the KL divergence in eq.~(\ref{eq:KL}). This yields the entropy gap, namely ${\rm KL}(q,p) = \frac{1}{2}\log|\boldsymbol\Sigma| - \frac{1}{2}\log|\boldsymbol{\Psi}|$,  computed in eq.~(\ref{eq:entropy-loss2}).
\end{proof}

It follows that FG-VI is minimizing the entropy gap between $p$ and $q$ when it targets a multivariate Gaussian. As suggested by the trade-off in Theorem~\ref{thm:trade-off}, however, the entropy gap can be minimized despite a large shrinkage in componentwise variances. 

The nonnegativity of the KL divergence in eq.~(\ref{eq:gapKL}) also leads to the other fundamental inequality of this section.
\begin{theorem}[Entropy gap] \label{thm:shrinkage-entropy}
The solution for FG-VI in eq.~(\ref{eq:Psi}) underestimates the entropy; that is,
\begin{equation}
\mathcal{H}(q) \leq \mathcal{H}(p),
\label{eq:entropy-shrinkage}
\end{equation}
and this inequality is strict if $\mathbf{\Sigma}$ is not purely diagonal.
\end{theorem}
An immediate implication of this theorem is that the shrinkage term in eq.~(\ref{eq:logS}) dominates the shrinkage-delinkage trade-off in Theorem~\ref{thm:trade-off}.

\begin{remark}
We see also from Theorem~\ref{thm:shrinkage-entropy} that $|\mathbf{\Psi}| \leq |\mathbf{\Sigma}|$. This inequality can be viewed as a multivariate analog of the result, in Theorem~\ref{thm:shrinkage-variance}, that $\Psi_{ii} \leq \Sigma_{ii}$.
\end{remark}

\subsection{Demonstration of the trade-off}
\label{sec:corr}

Figure~\ref{fig:entropy_loss} illustrates the shrinkage-delinkage trade-off in \mbox{FG-VI}, and how it is resolved, for multivariate Gaussian distributions with two types of covariance matrices:
 \begin{itemize}
 
       \item \textit{Squared exponential kernel:} 
       This type of covariance matrix arises in models involving Gaussian processes~\citep[e.g.][Chapter 2]{Rasmussen:2006}. For this example we sampled a random input ${\bf x} \sim \text{uniform}(0, 200)^n$ and set the covariance matrix via the kernel function 
       \mbox{$\Sigma_{ij} = \exp(-(x_i\!-\!x_j)^2/\rho^2)$}.
      We use the hyperparameter $\rho\!>\!0$ to vary the degree of correlation.
      
      \item \textit{Constant off-diagonal:} 
      The posterior distributions of models with exchangeable data~\citep[chapter 5]{Gelman:2013} can generate such covariance matrices, or at least covariance matrices whose subblocks have the described structure. For this example we set $\Sigma_{ii}\!=\!1$ along the diagonal and $\Sigma_{ij}\!=\!\varepsilon$ for all $i\!\neq\!j$, and we used the hyperparameter $\varepsilon\!>\!0$ to vary the degree of correlation.
  \end{itemize}

\begin{figure}[!h]
      \centering
      \includegraphics[width = 2.5in]{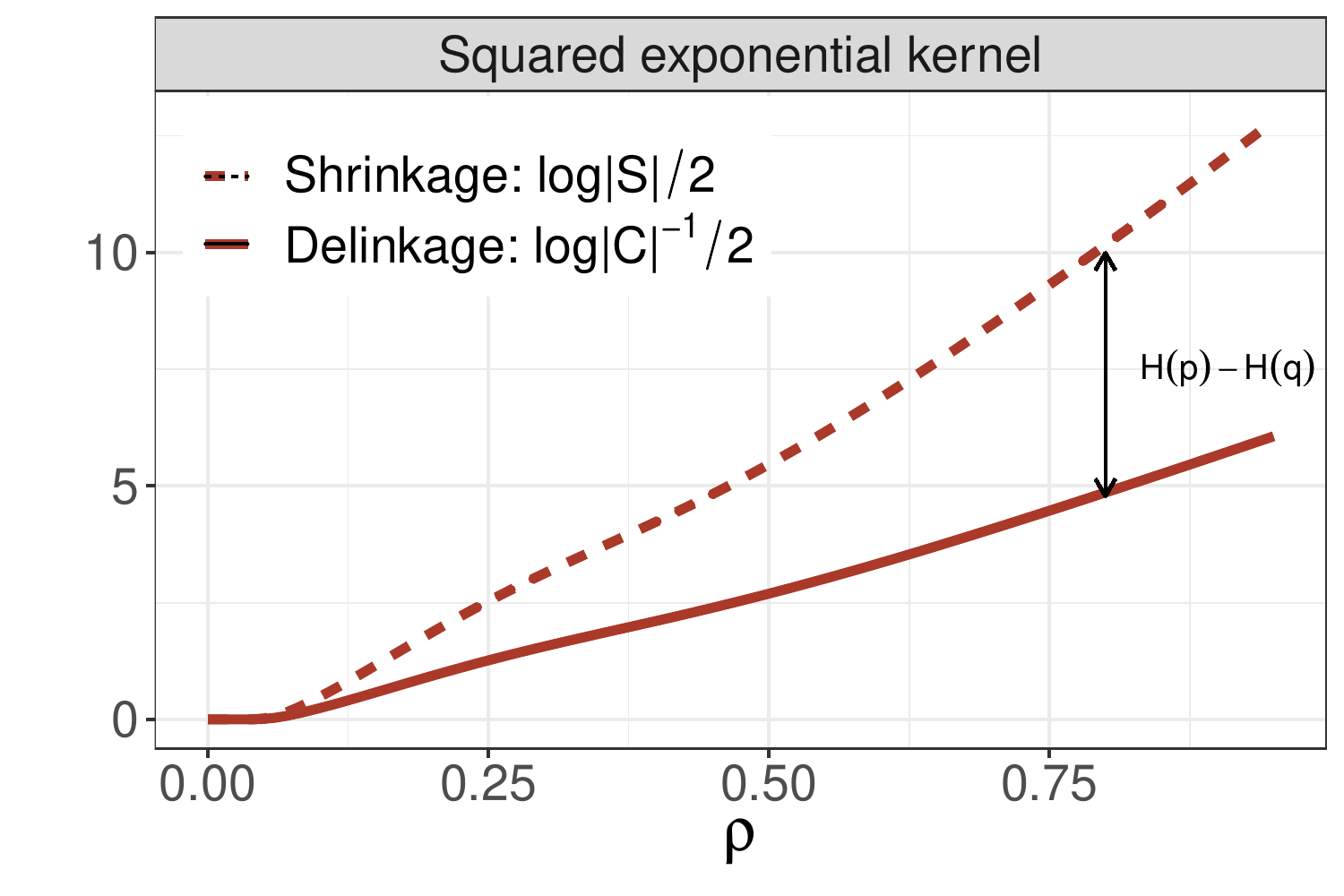}
      \includegraphics[width = 2.5in]{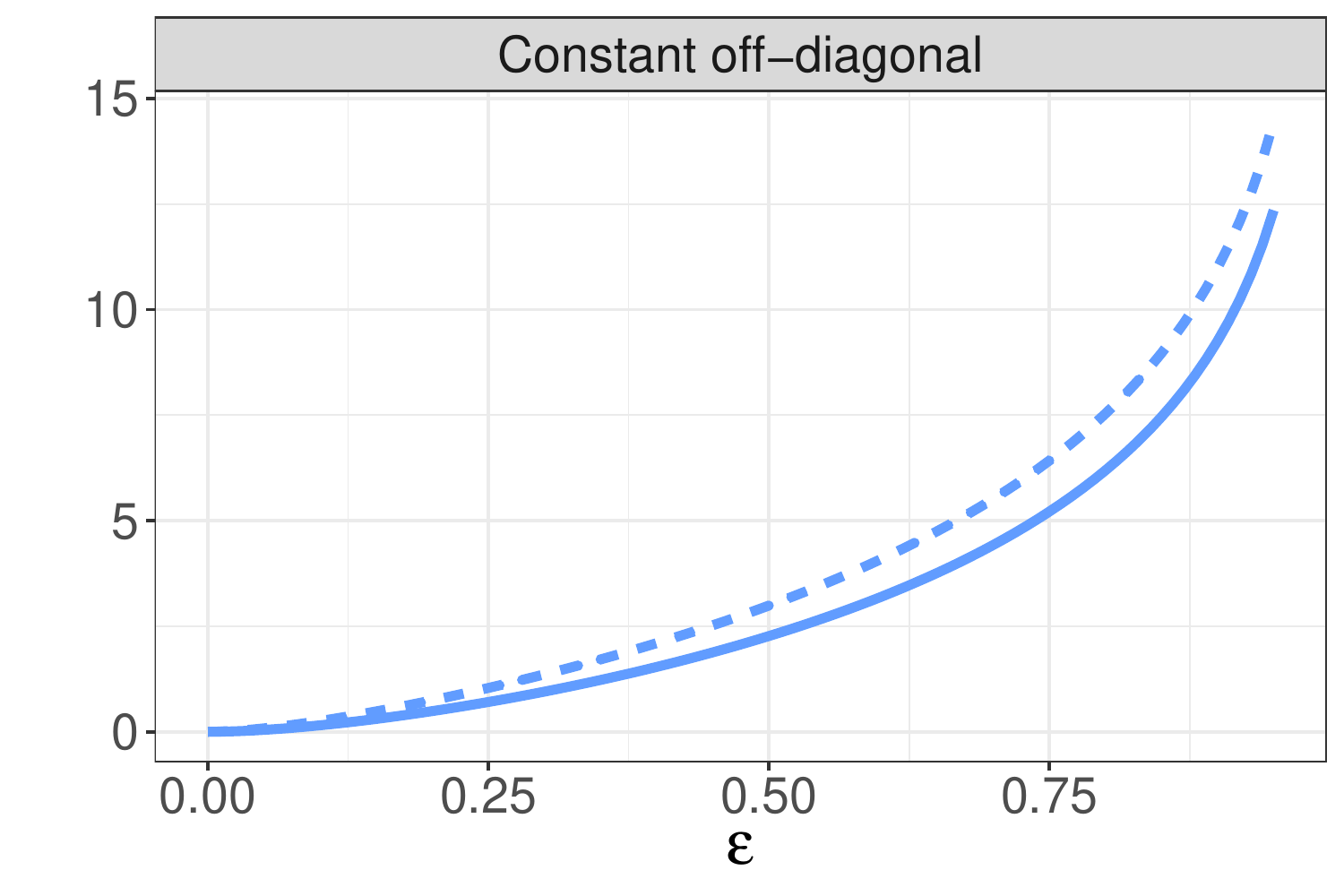}
      \caption{\textit{Shrinkage-delinkage trade-off in FG-VI when the Gaussian target over $\mathbb{R}^n$ has a squared-exponential-kernel covariance matrix ({\it top}) or a covariance matrix with constant off-diagonal terms ({\it bottom}). Here $n\!=\!10$.}}
      \label{fig:entropy_loss}
 \end{figure}
    Figure~\ref{fig:entropy_loss} plots the opposing contributions from the shrinkage and delinkage terms in eq.~(\ref{eq:entropy-loss}) using solid and dashed lines. In each panel, the difference between these curves reveals the degree to which FG-VI underestimates the entropy of the multivariate Gaussian distribution it is being used to approximate. 
    It can also be seen that FG-VI manages the shrinkage-delinkage trade-off differently for different types of covariance matrices. 
    While in the squared exponential kernel case the entropy gap is large, it is smaller when the covariance matrix has constant off-diagonal terms: there, the shrinkage and delinkage terms in eq.~(\ref{eq:entropy-loss}) are almost perfectly balanced.

    This last finding may come as a surprise in light of earlier results, shown in Figure~\ref{fig:shrinkage}, where the variational approximation is clearly too ``compact.'' But the two-dimensional projections in Figure~\ref{fig:shrinkage} are misleading.
    In higher dimensions, the approximating sphere of FG-VI gains more in volume than its target ellipse; this discrepancy arises because each added component is independent for $q$ but strongly correlated for~$p$. The overall effect is that the opposing terms in eq.~(\ref{eq:entropy-loss}) are nearly balanced. Hence even when FG-VI hardly underestimates the (per-component) entropy, it may still grossly underestimate the componentwise variance.
    This contrast becomes more acute in the asymptotic limit of $n$.
    \begin{theorem}  \label{thm:constant-off-diag}
        Suppose $\boldsymbol\Sigma$ has constant off-diagonal terms, $\varepsilon>0$.
        Then the per-component entropy gap vanishes in the limit $n\rightarrow\infty$, whereas every componentwise variance shrinks by a constant factor:
        \begin{align}
            \underset{n \to \infty} \lim \ \tfrac{1}{n} \left(\mathcal H(p) - \mathcal H(q)\right) &= 0. \\
            \underset{n \to \infty} \lim \ (\Psi_{ii}/\Sigma_{ii}) &= 1 - \varepsilon.
        \end{align}
      \end{theorem}
    The proof is given in Appendix~A. The theorem also shows that the average of the diagonal elements in the shrinkage matrix also converges to a constant factor:
        \begin{equation}
            \underset{n \to \infty} \lim \ \tfrac{1}{n} \mathrm{trace}({\bf S}) = (1 - \varepsilon)^{-1}.
        \end{equation}
        
    This example highlights the roots of FG-VI in mean-field approximations from statistical physics~\citep{Parisi:1988}. As is well known, the mean-field approximation for the free energy becomes exact in the limit $n\!\rightarrow\!\infty$ for certain spin systems with infinite-range interactions. The infinite-range interactions in these systems are analogous, for the Gaussian models we study here, to the assumption of constant off-diagonal terms in the covariance matrix~\citep[e.g][]{Mukherjee:2018, Margossian:2021}.
    An important takeaway is that factorized approximations can work well to estimate the entropy---to wit, minimizing the KL-divergence with FG-VI on a Gaussian target is equivalent to minimizing the entropy gap---but still fail to accurately compute the componentwise variances.
    This can become an important limitation as we apply VI beyond problems in statistical physics and more broadly to Bayesian modeling, where estimation of the variances is critical.

    FG-VI's limited ability to estimate the marginal variance is a product of both the choice of the approximating family (factorized Gaussians) and the choice the objective function, $\text{KL}(q || p)$.
    In Appendix~A we show that when minimizing the \textit{reverse} KL-divergence, $\text{KL}(p || q)$, for a target $p$, we obtain, in the above example, the opposite result: exact estimations of the marginal variances but an arbitrarily large entropy gap.


\section{Bounds on ${\bf log}|\mathbf{S}|$ and ${\bf log}|\mathbf{C}|$}

In the last section, we saw that the shrinkage-delinkage trade-off played out differently for different types of covariance matrices; we also proved certain asymptotic results that depended on the detailed structure of the covariance matrix (e.g., constant off-diagonal). 
In this section, we derive more general bounds on the terms in this trade-off that depend only
the problem dimensionality, $n$, and the condition number, $R$, of the correlation matrix, ${\bf C}$.


\subsection{Optimizations for upper bounds}

Consider the space of all correlation matrices with condition number $R$. We denote this space by the set
\begin{equation}
    \label{eq:CR}
    \mathcal{C}_R = \{\mathbf{C}\in\mathcal{S}^{n}_+\, |\, C_{ii}\!=\!1\ \forall i,\, \lambda_{\rm max}(\mathbf{C})\!=\! R\lambda_{\rm min}(\mathbf{C})\}.
\end{equation}
The set contains the intersection of those $n\!\times\! n$ matrices that are positive semidefinite (i.e., lying in the cone $\mathcal{S}_+^n$), whose diagonal elements are equal to unity, and whose largest eigenvalue is $R$ times larger than its smallest one.

If the condition number of the correlation matrix $\mathbf{C}$ is known to be $R$, then we can (in principle) compute the following upper bounds on the terms in eq.~(\ref{eq:entropy-loss}):
\begin{align}
\log|\mathbf{S}| &\leq \max_{\mathbf{C}\in\mathcal{C}_R}\left[\sum_{i=1}^n \log C_{ii}^{-1}\right],
\label{eq:boundS1} \\
\log|\mathbf{C}| &\leq \max_{\mathbf{C}\in\mathcal{C}_R}\left[\sum_{i=1}^n \log \lambda_i(\mathbf{C})\right].
\label{eq:boundC1}
\end{align} 
In eq.~(\ref{eq:boundS1}), we have used the fact from eq.~(\ref{eq:Sii}) that \mbox{$S_{ii} \!=\! C^{-1}_{ii}$}, while in eq.~(\ref{eq:boundC1}), we have written the determinant of a matrix as the product of its eigenvalues.

In practice, however, it is difficult to perform the optimizations over the set $\mathcal{C}_R$ in eq.~(\ref{eq:boundS1}-\ref{eq:boundC1}). Instead we consider a more tractable relaxation; the essential idea is to optimize over a larger set of matrices, one that is characterized only in terms of constraints on its eigenvalues. We denote this constrained set of eigenvalues by
\begin{equation}
\label{eq:LR}
\hspace{-1.75ex}\Lambda_R = \left\{\boldsymbol{\lambda}\!\in\!\mathbb R^n_+\, \bigg|\, \lambda_1\!\geq\!\ldots\geq\!\lambda_n=R\lambda_1,\, \sum_{i=1}^n \lambda_i \!=\! n\right\}\!.
\end{equation}
Note that the set ${\cal C}_R$ of correlation matrices is contained strictly within the set of matrices with eigenvalues in $\Lambda_R$. In particular, a matrix in ${\cal C}_R$ is constrained to have ones along its diagonal, while a matrix with eigenvalues in $\Lambda_R$ is only constrained to have a trace equal to $n$. With the above relaxation, we obtain the following upper bounds
on the terms $\log|\mathbf{S}|$ and $\log|\mathbf{C}|$ in eq.~(\ref{eq:entropy-loss}). 

\begin{proposition} 
Suppose that the correlation matrix $\mathbf{C}$ has condition number $R$. Then
\begin{align}
\log |\mathbf{S}| &\leq n \log \frac{1}{n}\!\left[\max_{\boldsymbol\lambda\in\Lambda_R}
  \sum_{i=1}^n \lambda_i^{-1}\right]
\label{eq:boundS2} \\
\log |\mathbf{C}| &\leq \max_{\boldsymbol\lambda\in\Lambda_R}\left[\sum_{i=1}^n \log \lambda_i\right].
\label{eq:boundC2}
\end{align}
\end{proposition}
\begin{proof}
The second bound is immediate from eq.~(\ref{eq:boundC1}) and the relaxation in eq.~(\ref{eq:LR}). 
For the first bound, recall that $S_{ii} = C^{-1}_{ii}$, and note from Jensen's equality that
\begin{equation*}
\tfrac{1}{n}\mbox{$\sum_i$} \log C_{ii}^{-1} \leq \log\tfrac{1}{n}\mbox{$\sum_i$}\, C^{-1}_{ii} = \log\left[\tfrac{1}{n}\mbox{$\sum_i$}\, \lambda_i^{-1}(\mathbf{C})\right].
\end{equation*}
The bound in eq.~(\ref{eq:boundS2}) follows from the above in concert with the relaxion in eq.~(\ref{eq:LR}).
\end{proof}


\subsection{Solutions from symmetry}

\begin{figure*}
    \includegraphics[width=3in]{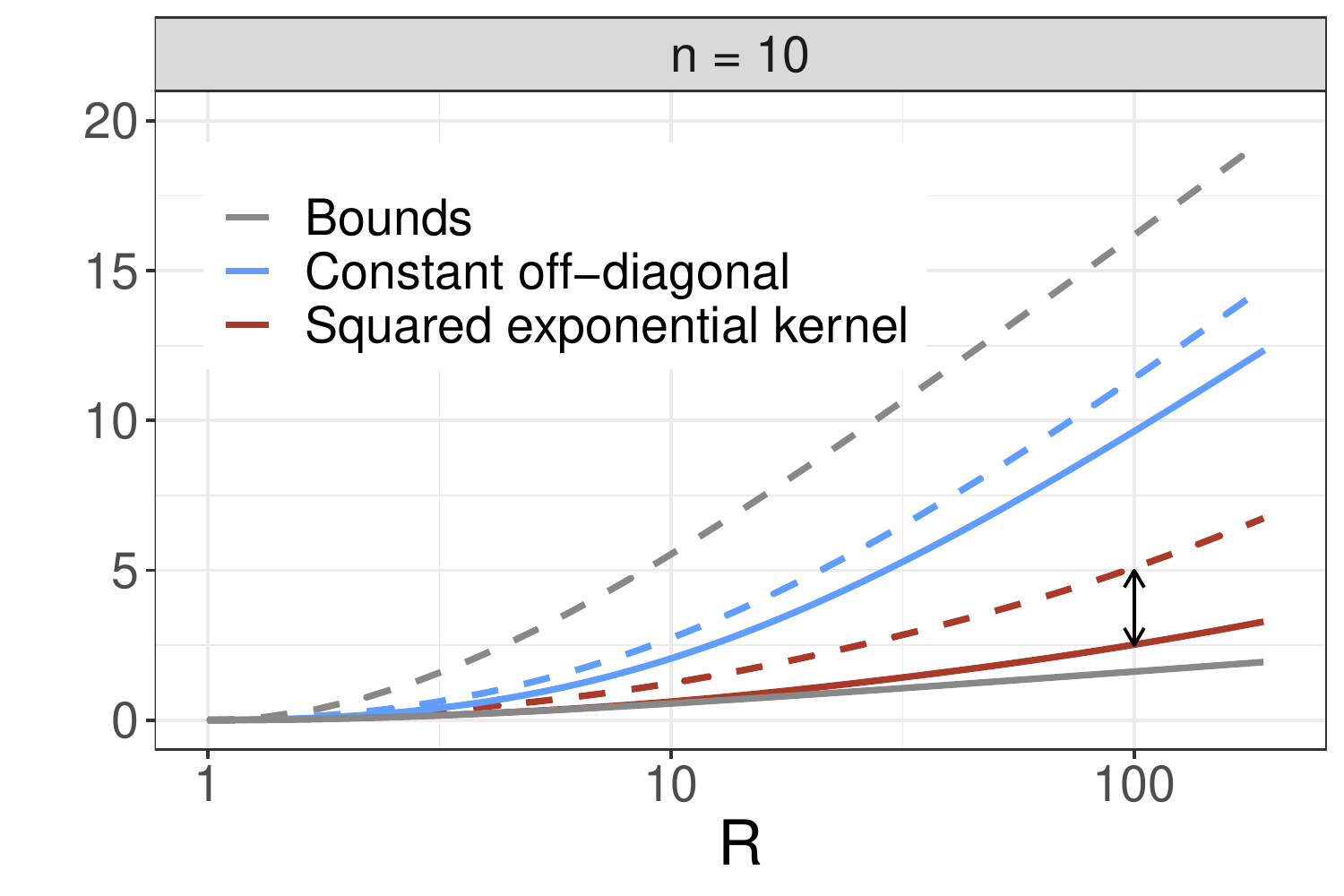}
    \includegraphics[width=3in]{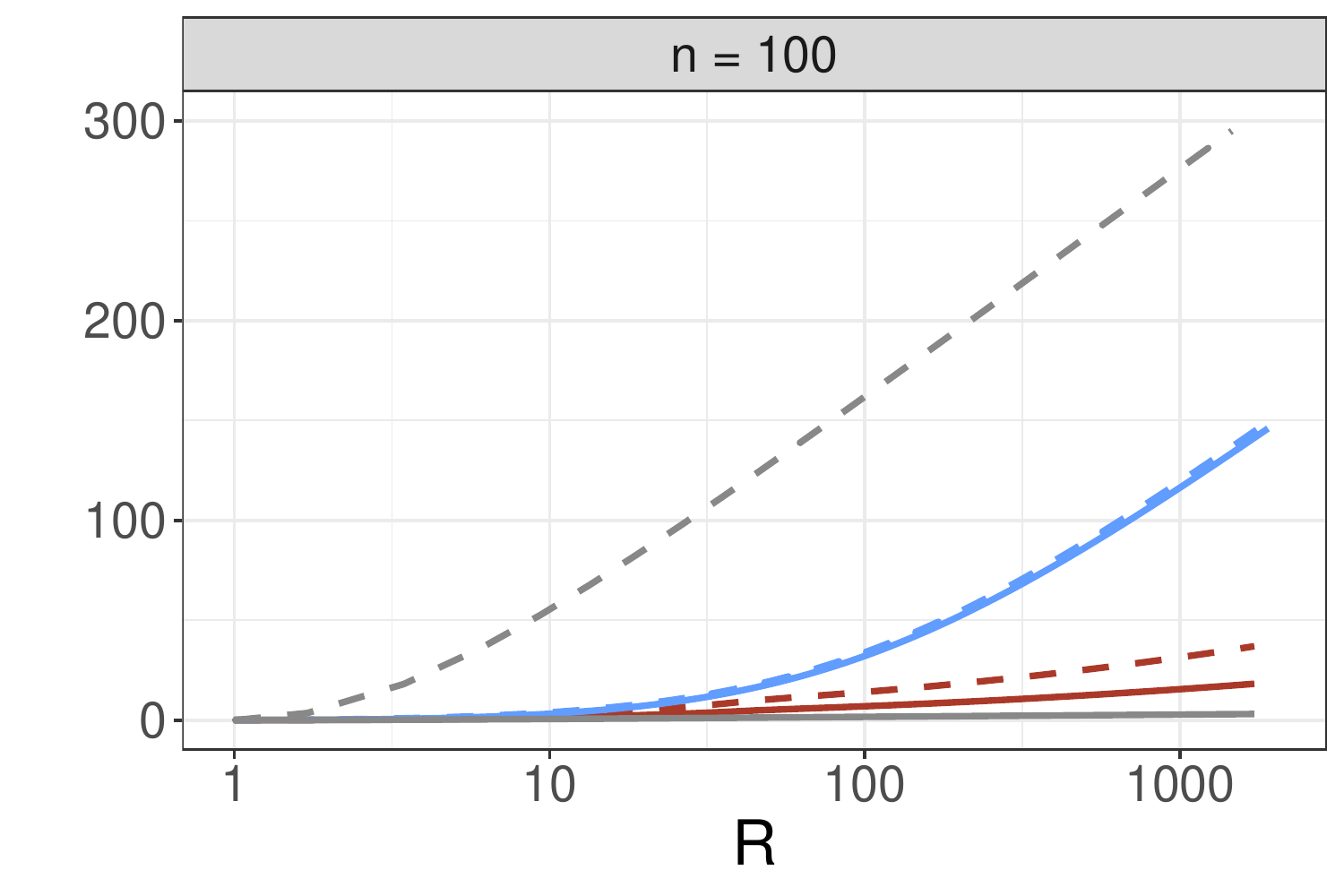}
    \caption{\textit{
    Bound on the entropy gap.
    The blue and red curves are replotted from Figure~\ref{fig:entropy_loss}.
    The dotted gray line is an upper bound on the shrinkage term, $\log |S| / 2$, and the solid gray line is a lower bound on the delinkage term, $\log|C|^{-1} / 2$.
    Hence the difference between the gray lines provides an upper bound on the entropy gap, $\mathcal H(p) - \mathcal H(q) = log |S| / 2 - \log|C|^{-1} / 2$ (Theorem~\ref{thm:trade-off}).
    }}
    \label{fig:entropy_bound}
\end{figure*}

The optimizations over $\Lambda_R$ in eqs.~(\ref{eq:boundS2}--\ref{eq:boundC2}) have a great deal of structure that we can exploit to compute their solutions. We analyze each of these optimizations in turn.

\begin{lemma} \label{lemma:symmetry}
  Let $\boldsymbol\lambda\in\Lambda_R$ be the solution that maximizes the right side of eq.~(\ref{eq:boundS2}). Then at most one $\lambda_i$ is not equal to either $\lambda_1$ or~$\lambda_n$.
\end{lemma}
\begin{proof}
We prove the lemma by contradiction. Suppose there exists a solution with intermediate elements $\lambda_i$ and $\lambda_j$ that satisfy
$\lambda_1\! >\! \lambda_i\! >\! \lambda_j\! >\! \lambda_n$.
Consider the effect on this solution of a perturbation that adds some small amount $\delta\!>\!0$ to $\lambda_i$ and subtracts the same amount from $\lambda_j$. Note that for sufficiently small~$\delta$, this perturbation will not leave the set~$\Lambda_R$; however, it will {\it expand} the separation of $\lambda_i$ from~$\lambda_j$. As a result the objective $\sum_i \lambda_i^{-1}$ has a gain
\begin{equation}
  f(\delta) = \frac{1}{\lambda_i+\delta} - \frac{1}{\lambda_i} + \frac{1}{\lambda_j-\delta} - \frac{1}{\lambda_j}.
\end{equation}
Next we evaluate the derivative $f'(\delta)$ at $\delta=0$; doing so we find $f'(0) = \lambda_j^{-2}\! -\! \lambda_i^{-2} > 0$. But this yields a contradiction, because any solution must be maximal, and hence stationary (i.e., $f'(0)\!=\!0$), with respect to small perturbations.
\end{proof}

The above lemma greatly restricts the form of the solutions that we must consider for the optimization in eq.~(\ref{eq:boundS2}). The next lemma does the same for the optimization in eq.~(\ref{eq:boundC2}).

\begin{lemma} \label{lemma:symmetry2}
  Let $\boldsymbol\lambda\in\Lambda_R$ be the solution that maximizes the right side of eq.~(\ref{eq:boundC2}). Then $\lambda_i\!=\!\lambda_j$ whenever $1\!<\!i\!<\!j\!<\!n$.
\end{lemma}
\begin{proof}
We prove this lemma in similar fashion. Suppose there exists a solution
with intermediate elements $\lambda_i$ and $\lambda_j$ that satisfy
$\lambda_1\! \geq\! \lambda_i\! >\! \lambda_j\! \geq\! \lambda_n$.
Consider the effect on this solution of a perturbation that adds some small amount $\delta\!>\!0$ to $\lambda_j$ and subtracts the same amount from~$\lambda_i$. Again, for sufficiently small~$\delta$, this perturbation will not leave the set~$\Lambda_R$; however, it will {\it diminish} the separation of~$\lambda_i$ from~$\lambda_j$. As a result the objective $\sum_i \log\lambda_i$ has a gain
\begin{equation}
  g(\delta) = \log(\lambda_i-\delta) +\log(\lambda_j+\delta) -\log(\lambda_i) - \log(\lambda_j).
  \end{equation}
Evaluating the derivative, we find $g'(0) = \lambda_j^{-1}\! -\! \lambda_i^{-} > 0$. As before this yields a contradiction, because any solution must be maximal, and hence stationary (i.e., $g'(0)\!=\!0$), with respect to small perturbations.
\end{proof}

Now let us consider, at a high level, how these lemmas simplify the optimizations in eqs.~(\ref{eq:boundS2}--\ref{eq:boundC2}).
The lemmas show that for each optimization, there exist {\it three} elements---the maximum element $\lambda_1$, the minimum element $\lambda_n$, and some intermediate element $\lambda_k$ for $1\!<\!k\!<\!n$---from which the remaining $n\!-\!3$ elements of the solution can be deduced by symmetry. Note also that any solution in $\Lambda_R$ must satisfy the {\it two additional} constraints that $\sum_i\lambda_i\!=\!n$ and $\lambda_1\!=\!R\lambda_n$. 
In Appendix B, we show that by exploiting these symmetries and constraints in concert, we can reduce the optimizations in eqs.~(\ref{eq:boundS2}--\ref{eq:boundC2}) to a sequence of one-dimensional problems for which we have closed-form solutions. 

    Figure~\ref{fig:entropy_bound} plots the bounds on the shrinkage and delinkage terms as a function of the condition number, $R$, for problems with dimensionalities $n\!=\!10$ (\textit{left}) and $n\!=\!100$ (\textit{right}). These bounds provide envelopes between which the actual values of the competing terms in eq.~(\ref{eq:entropy-loss}) must lie. To illustrate this, the figure also shows
    the corresponding values of these terms for the squared-exponential-kernel and constant-off-diagonal covariance matrices introduced in the previous section. (Notice that in this figure, unlike Figure~\ref{fig:entropy_loss}, these values are plotted against the condition number of the correlation matrix rather than the hyperparameters $\rho$ or $\varepsilon$.) 

    Using similar methods, it also possible to upper-bound the entropy gap and the trace of the shrinkage matrix (which reflects the average shrinkage in componentwise variance).
    The derivations of these additional bounds are relegated to Appendices~C and D.  


  \section{Non-Gaussian models} \label{sec:examples}

 Do our results extend in any way when FG-VI is applied to non-Gaussian models?
  In this section we suppose that~$p$ is a \textit{non-Gaussian} target with covariance~$\boldsymbol\Sigma$.
 Our previous analysis of FG-VI targets was based on the variance estimator,
  \begin{equation*}
      (\Psi_G)_{ii} := 1 / (\Sigma^{-1})_{ii},
  \end{equation*}
  and the corresponding shrinkage matrix with diagonal elements $(S_G)_{ii} = \Sigma_{ii} / (\Psi_G)_{ii}$.
  But neither $\mathbf{\Psi}_G$ nor $\mathbf{S}_G$ will be returned by FG-VI when it is applied to a non-Gaussian target with covariance $\boldsymbol\Sigma$. 

To explore these issues, we applied ADVI~\citep{Kucukelbir:2017} with a factorized Gaussian approximation to study the posterior distributions in several Bayesian models as well as one ``adversarial'' target (Table~\ref{tab:models}).
  These test targets represent a diversity of applications.
  The GLM and 8-schools models are taken from the model data base PosteriorDB \citep{Magnusson:2022}, while the disease map Gaussian process model and sparse kernel interaction model \citep{Agrawal:2019} are studied by \citep{Margossian:2020}.
  We also included a mixture of well-separated spherical Gaussians; for this target, the approximation by FG-VI collapses to one of the modes, so that FG-VI can underestimate the componentwise variances by an arbitrarily large amount (e.g., if the modes are widely separated). 
  Note that in all cases, before applying ADVI, we transformed any constrained (e.g., nonnegative) variables of the target distribution to be unconstrained variables over $\mathbb{R}$.

    \begin{table*}[!htb]
  \small
      \centering
      \renewcommand{\arraystretch}{1.1}
      \begin{tabular}{l r l r}
         \rowcolor{Gray} {\bf Call}  &  $n$ & {\bf Description} & $\frac{1}{n} \log|\boldsymbol \Sigma / \boldsymbol \Psi|$ \\
         \texttt{glm\_binomial} & 3 & General linear model with a binomial likelihood. &  0.291\\
         \rowcolor{Gray} \texttt{8schools\_nc} & 10 & Hierarchical model with a non-centered parameterization. & 0.011 \\
         \texttt{8schools\_pool} & 9 & Same model but with a small, fixed population variance value to enforce & 0.339 \\
         & & strong partial pooling and create a high posterior correlation. \\
         \rowcolor{Gray} \texttt{disease\_map} & 102 & Gaussian process model with Poisson likelihood. Applied to disease & 0.066 \\
        \rowcolor{Gray} & & map of Finland using 100 randomly sampled counties (out of 911). & \\
        \texttt{SKIM} & 305 & Sparse kernel interaction model, applied to a Prostate cancer microarray & 0.033 \\
        & & data set on a subset of 200 SNPS. \\
        \rowcolor{Gray} \texttt{Mixture} & 2 & Mixture of well-separated Gaussians with spherical covariance matrices. & 3.051
      \end{tabular}
      \caption{\textit{Non-Gaussian targets for numerical experiments.}}
      \label{tab:models}
  \end{table*}
  
  We estimated the posterior covariance using long runs of Markov chain Monte Carlo,
  specifically 16,000 draws using the software Stan \citep{Carpenter:2017}, except in the mixture example, where the covariance was calculated analytically.
  We then estimated (i) the shrinkage matrix, $\mathbf{S}$, when targeting the posterior and (ii) the shrinkage matrix,~$\mathbf{S}_G$, when targeting a Gaussian with the same covariance as the posterior.
  For non-Gaussian posteriors, we observe that FG-VI does \textit{not} always underestimate the componentwise variance; see Figure~\ref{fig:8schools_shrinkage}.
  On the other hand, for all models in Table~\ref{tab:models}, we see that $\frac{1}{n} \text{trace}(\mathbf{S})\! >\! 1$, meaning that the componentwise variances are underestimated \textit{on average} (Figure~\ref{fig:model_S}).
  In addition, for the Bayesian models, we observe that \mbox{$\text{trace}(\mathbf{S})\! \approx\! \text{trace}(\mathbf{S}_G)$}.
  The mixture target, however, provides a counter-example where $\text{trace}(\mathbf{S}) \gg \text{trace}(\mathbf{S}_G)$.

    \begin{figure}[!h]
      \centering
      \includegraphics[width = 3in]{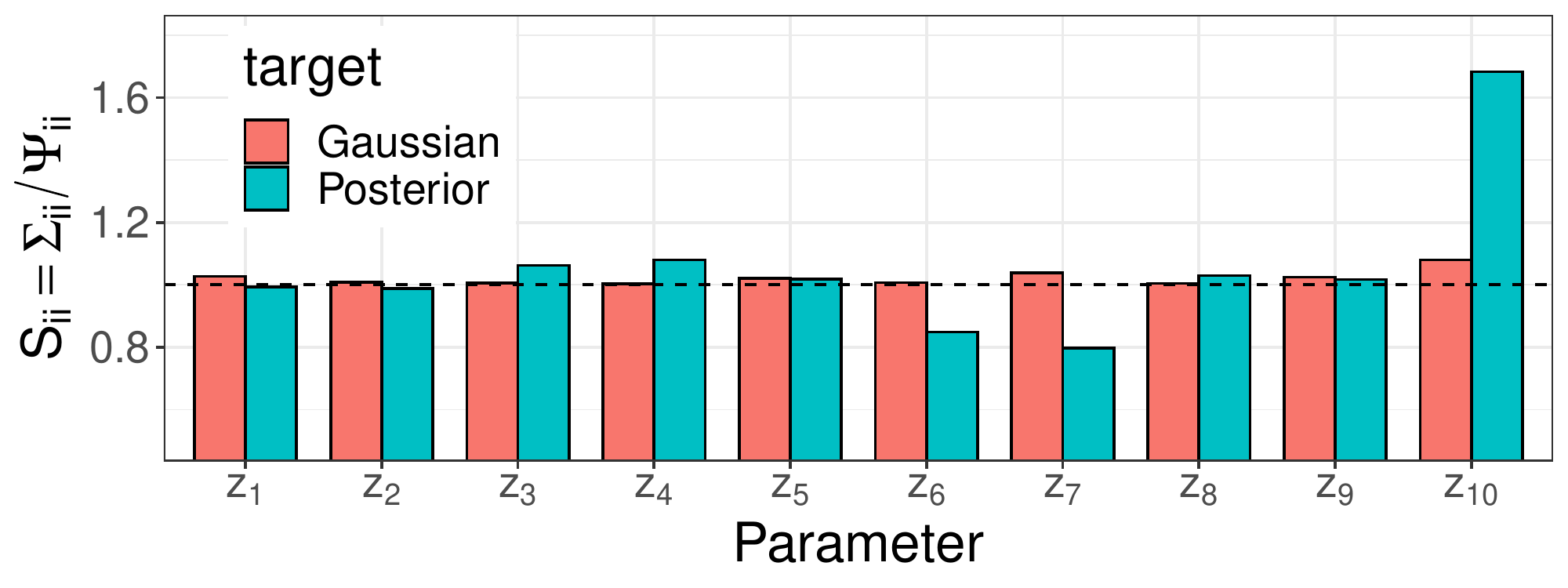}
      \caption{\textit{Shrinkage matrix for FG-VI when targeting the posterior distribution of \texttt{8schools\_nc} versus targeting a Gaussian with the same covariance matrix. For the non-Gaussian target, we may have $S_{ii}\! <\! 1$.}}
      \label{fig:8schools_shrinkage}
  \end{figure}

  \begin{figure}[!h]
    \centering
    \includegraphics[width=3in]{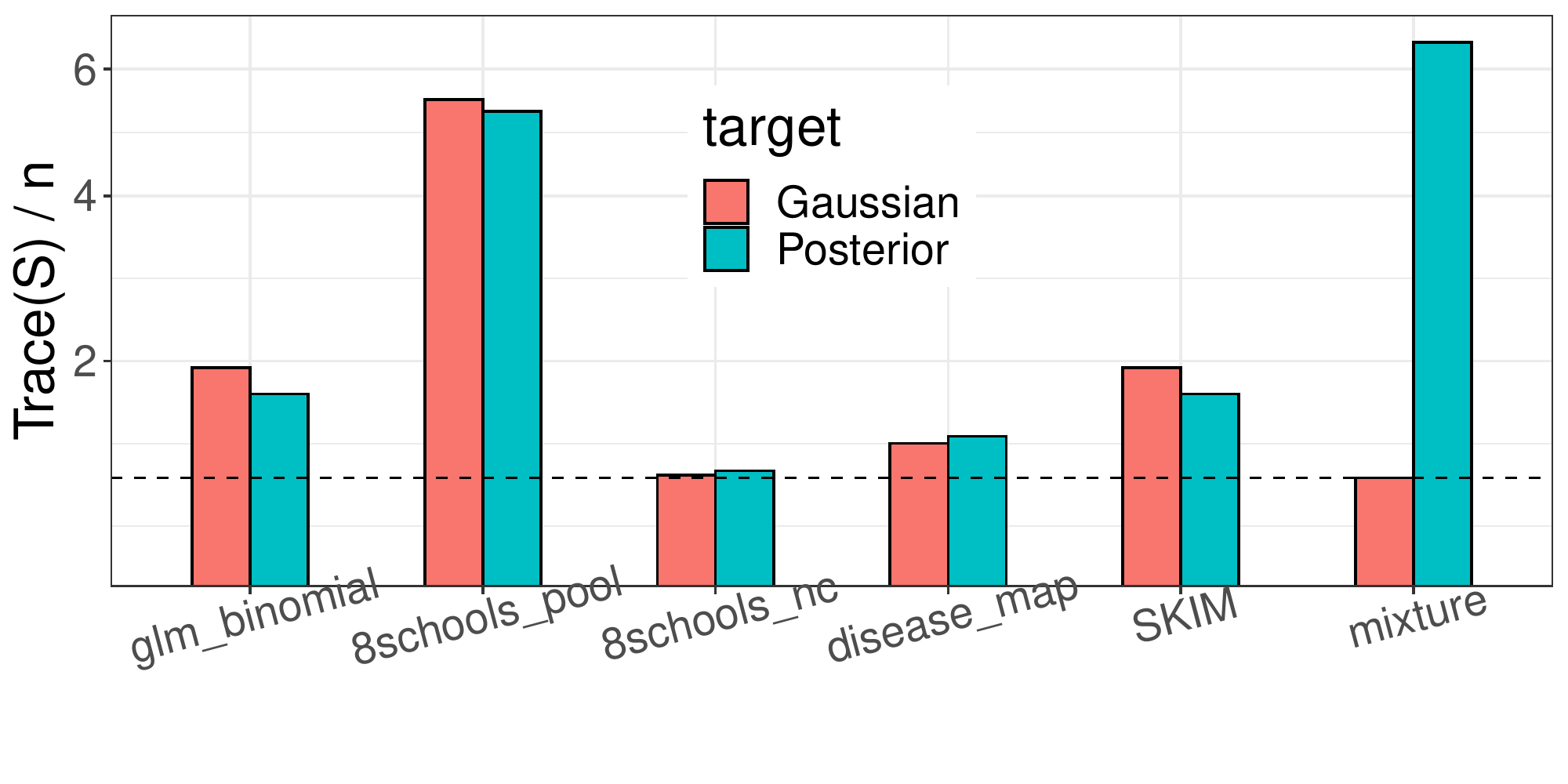}
    \caption{\textit{Trace of shrinkage matrix for various models when targeting the true posterior versus targeting a Gaussian with the same covariance matrix.}}
    \label{fig:model_S}
\end{figure}  

  In addition to the shrinkage in componentwise variances, we would have also liked to evaluate the entropy gap in these models.
  It is easy to obtain an upper bound on this gap by observing that the Gaussian maximizes the entropy among all continuous distributions with a given covariance~$\boldsymbol \Sigma$~\citep{Cover:2006}. Thus we have
  \begin{equation}  \label{eq:non-gaussian-bound}
      \mathcal H(p) - \mathcal H(q) \le \tfrac{1}{2}(\log|\boldsymbol \Sigma| - \log|\boldsymbol \Psi|).
  \end{equation}
   We observed this upper bound to be positive for all the models in Table~\ref{tab:models}. But we also know that FG-VI can overestimate the entropy in non-Gaussian models: \citet{Turner:2011} demonstrated this for a mixture of largely overlapping 1-dimensional Gaussians.
  It is an open question to understand the general conditions under which FG-VI underestimates the entropy.
  We next note that our upper-bound on the entropy gap does not immediately apply to \eqref{eq:non-gaussian-bound}, because it is unclear how $\log |\boldsymbol \Psi|$ and $\log |\boldsymbol \Psi_G|$ compare.
   To evaluate the entropy gap empirically in a non-Gaussian model---as would be required to investigate this question further---it is necessary to estimate the normalizing constant of the posterior.
  Candidate methods for this, such as bridge sampling \citep[e.g][]{Gronau:2020, Meng:2002}, rely on a proposal distribution which (roughly) approximates the target.
  Typically, a Gaussian-like approximation is used for these proposals, but this is precisely the assumption we want to relax. In other words, we do not wish to compare a theory for Gaussian targets to an empirical benchmark which relies on a Gaussian approximation. We leave this issue to future work.

\section{Discussion}

In this paper we have shown that FG-VI underestimates the componentwise variance and joint entropy of a multivariate Gaussian distribution.
Furthermore we expressed the entropy gap as a trade-off between two competing terms and observed that it was equal to the KL divergence minimized by FG-VI.
Our analysis helps to understand why FG-VI can greatly underestimate the componentwise variances even when it effectively minimizes the entropy gap and KL divergence.
Our results also suggest that better estimates of variance may be obtained by changing the objective function or using a different family of approximations.

This research has practical implications on when to use FG-VI. When the target distribution exhibits strong correlations, FG-VI can return poor estimates of the marginal variance; this is a limitation in Bayesian modeling where we often care about the posterior variance of interpretable variables.
On the other hand, FG-VI can still produce good estimates of the entropy, notably in the limit where $n$ is large. This is one reason that factorized approximations have been widely used for many problems in statistical physics.

An open question is whether a shrinkage-delinkage tradeoff operates when FG-VI is applied to non-Gaussian targets.
For such targets, we have produced a counter-example showing that FG-VI can overestimate
a particular componentwise variance. On the other hand, we have observed that these variances are underestimated on average, and moreover that the shrinkage term, $\log |{\bf S}|$, remains positive.
It requires further investigation to make more general statements about the entropy gap.
Finally, it would also be interesting to extend our analysis beyond FG-VI---for instance to approximations based on a diagonal plus low-rank covariance matrix, rather than a strictly diagonal one \citep[e.g][]{Zhang:2022}.

\section{Acknowledgments}

We thank David Blei, Bob Carpenter, Justin Domke and Chirag Modi for feedback on this manuscript.

\bibliography{ref}

\newpage
\
\newpage

\appendix

\section*{Appendix}

In appendix~\ref{app:cov}, we provide further details on FG-VI when the covariance matrix has constant off-diagonal terms. Next appendix~\ref{app:alg} provides an algorithm to efficiently compute bounds on the shrinkage and delinkage terms, using techniques developped in section 4.
In appendix~\ref{app:traceS}, we show how to derive bounds on the trace of the shrinkage matrix.
In appendix~\ref{app:KL}, we obtain an upper bound on the KL divergence---equivalently the entropy gap---between $q$ and~$p$.


\appendix
\section{FG-VI for covariance with constant off-diagonal terms}
\label{app:cov}

Let $p$ be a multivariate Gaussian distribution over $\mathbb{R}^n$ with mean $\boldsymbol \mu$ and covariance matrix $\boldsymbol\Sigma$.
The elements of the correlation matrix are related to those of the covariance matrix by
\begin{equation}
    C_{ij}=\frac{\Sigma_{ij}}{\sqrt{\Sigma_{ii}\Sigma_{jj}}}.
\label{eq:corr}
\end{equation} 
In this section we assume that the correlation matrix has constant off-diagonal terms, and we use $\varepsilon\in[0,1)$ to denote the value of these terms. Note that we require $\varepsilon\geq 0$ since three or more random variables cannot all be mutually anti-correlated. Also we require $\varepsilon\!<\!1$ since otherwise $\mathbf{C}$ (and hence $\boldsymbol\Sigma$) would not be positive-definite.

\subsection{Solution for factorized Gaussian variational inference}

Let $q$ be the solution of FG-VI with diagonal covariance matrix $\boldsymbol\Psi$, where
\begin{equation}
    \Psi_{ii} = \frac{1}{\Sigma^{-1}_{ii}}
\label{eq:psi}
\end{equation} as in eq.~(4).

We now prove Theorem~3.5, broken up into a statement about the estimated variance (Proposition~\ref{prop:asymptotic-shrinkage}) and a statement about the entropy gap (Proposition~\ref{prop:asymptotic-entropy}).

\begin{proposition} \label{prop:asymptotic-shrinkage}
    If the correlation matrix in eq.~(\ref{eq:corr}) has constant off-diagonal terms, then the solution for FG-VI in eq.~(\ref{eq:psi}) obeys the following limits:
\begin{eqnarray}
   \underset{n \to \infty}{\lim} \Psi_{ii} & = & (1\! -\! \varepsilon)\Sigma_{ii} \\
    \underset{\varepsilon \to 1}{\lim} \Psi_{ii} & = & 0 \\
    \underset{n \to \infty}{\lim} \frac{1}{n} \mathrm{trace}({\bf S}) & = & \frac{1}{1 - \varepsilon}
 \end{eqnarray}
 where $\varepsilon\in[0,1)$ denotes the value of $C_{ij}$ for $i\neq j$.
\end{proposition}

\begin{proof}
Let $\mathbf{1}\in\mathbb{R}^n$ denote the vector of all ones. Then the correlation matrix can be written as
 \begin{equation}
      \mathbf{C} = (1 - \varepsilon) \mathbf{I} + \varepsilon {\bf 1} {\bf 1}^\top.
      \label{eq:corr-epsilon}
  \end{equation}
  One can verify by direct substitution that the inverse correlation matrix has elements
  \begin{equation}
      \mathbf{C}^{-1} = \tfrac{1}{1 - \varepsilon}\left[{\bf I} - \tfrac{\varepsilon}{1 + (n - 1) \varepsilon} {\bf 1}{\bf 1}^\top\right].
   \label{eq:Cinv}
  \end{equation}
Recall from eq.~(15) that $\Psi_{ii} = \Sigma_{ii}/C_{ii}^{-1}$. With some algebra, it follows from eq.~(\ref{eq:Cinv}) that
\begin{equation}
    \Psi_{ii} = \left[\frac{(1-\varepsilon)(1+(n\!-\!1)\varepsilon)}{1+(n\!-\!2)\varepsilon}\right]\Sigma_{ii}.
    \label{eq:psi-epsilon}
\end{equation}
It is straightforward to take the limits of eq.~(\ref{eq:psi-epsilon}) as $n\rightarrow\infty$ or $\varepsilon\rightarrow 1$, and these limits yield the results of the proposition.

Finally, recalling that $\text{trace}({\bf S}) = \sum_{i = 1}^n \Sigma_{ii} / \Psi_{ii}$, we immediately get
\begin{equation}
    \underset{n \to \infty}{\lim} \frac{1}{n} \text{trace}({\bf S}) = \frac{1}{1 - \varepsilon}.
\end{equation}
\end{proof}

 \begin{proposition}  \label{prop:asymptotic-entropy}
     If the correlation matrix in eq.~(\ref{eq:corr}) has constant off-diagonal terms, then the per-component entropy gap from FG-VI vanishes in the limit $n\rightarrow\infty$; that is,
      \begin{equation}
          \underset{n \to \infty}{\lim} \ \frac{1}{n} \left [ \mathcal H(p) - \mathcal H(q) \right ] = 0.
      \end{equation}
  \end{proposition}

\begin{proof}
Recall from Theorem 3.2 of the main paper that the entropy gap for FG-VI is given by
\begin{equation}
    \mathcal{H}(p)-\mathcal{H}(q) = \tfrac{1}{2}\log|\mathbf{S}| + \tfrac{1}{2}\log|\mathbf{C}|,
    \label{eq:gap}
\end{equation}
where $\mathbf{S}$ is the diagonal shrinkage matrix with elements $S_{ii} = \Sigma_{ii}/\Psi_{ii}$. We consider each term on the right side of this equation in turn. It follows at once from eq.~(\ref{eq:psi-epsilon}) that
\begin{equation}
\log|\mathbf{S}| = n\left[\log\frac{1+(n\!-\!2)\varepsilon}{(1\!-\!\varepsilon)(1+(n\!-\!1)\varepsilon)}\right],
\label{eq:logS}
\end{equation}
where $\varepsilon\!>\!0$ denotes the amount of off-diagonal correlation. Next we show how to evaluate $\log|\mathbf{C}|$. From eq.~(\ref{eq:corr-epsilon}), we rewrite the correlation matrix as
   \begin{equation}
        \mathbf{C} = (1\! -\! \varepsilon) {\bf I} + n \varepsilon \left (\tfrac{1}{\sqrt n} {\bf 1} \right) \left (\tfrac{1}{\sqrt n} {\bf 1} \right)^T.
        \label{eq:eigC}
    \end{equation}
    Note that the second term on the right side of eq.~(\ref{eq:eigC}) is a rank-one matrix whose one nonzero eigenvalue is equal to $n\varepsilon$. By adding the first term---which is a multiple of the identity matrix---we obtain a new matrix whose eigenvalues are shifted by a uniform amount. It follows that this new matrix (namely, $\mathbf{C}$) has $n\!-\!1$ eigenvalues at $1\!-\!\varepsilon$ and one eigenvalue at $1+(n\!-\!1)\varepsilon$, so that
    \begin{equation}
        \log|\mathbf{C}| = (n\!-\!1)\log(1\!-\!\varepsilon) + \log (1 + (n\!-\!1) \varepsilon).
        \label{eq:logC}
    \end{equation}
The entropy gap is related to the sum of $\log|\mathbf{S}|$ and $\log|\mathbf{C}|$ by eq.~(\ref{eq:gap}). Adding the results in eq.~(\ref{eq:logS}) and eq.~(\ref{eq:logC}), we find that
\begin{align}
\log|\mathbf{S}| + \log|\mathbf{C}|
    &=  -\log(1\! -\! \varepsilon) + \log (1 + (n\! -\! 1) \varepsilon) \nonumber \\
    &\mbox{\hspace{2ex}}\hspace{3ex} +\ n \log \left [ \tfrac{1 + (n - 2) \varepsilon}{1 + (n - 1) \varepsilon} \right].
    \label{eq:sum}
\end{align}
Note that the first term on the right side is $\mathcal O(1)$, the second term is $\mathcal O(\log n)$, and the third term can be 
written as
    \begin{equation}
        n \log \left [ \tfrac{1 + (n - 2) \varepsilon}{1 + (n -1) \varepsilon} \right]
          = n \log \left [ 1 - \tfrac{\varepsilon}{1 + (n - 1) \varepsilon}  \right].
    \end{equation}
    For large $n$, the log term in this equation is $\mathcal O(\frac{1}{n})$ so that the entire expression is $\mathcal O(1)$. From eq.(\ref{eq:sum}), it therefore follows that the entropy gap in eq.~(\ref{eq:gap}) is $\mathcal O(\log n)$. Dividing by $n$, we see that the per-component entropy gap vanishes in the limit $n\rightarrow\infty$, thereby completing the proof.
  \end{proof}

  From \eqref{eq:sum}, we also see that the entropy gap becomes infinite as $\varepsilon \to 1$ (for fixed $n$).
  Additional limits can be considered with respect to both $\varepsilon$ and $n$, but we do not pursue those here.

  \subsection{Solution when minimizing the reverse KL-divergence}

  A factorized approximation cannot both match the marginal variances and the entropy of the target distribution.
  In the example at hand, minimizing $\text{KL}(p||q)$ leads to good estimates of the entropy but not of the marginal variances.
  We now show that when minimizing the reverse KL-divergence, $\text{KL}(p||q)$, the opposite behavior occurs.

  We first state the solution obtained when minimizing the reverse KL-divergence.
  The following is the counterpart to Proposition~\ref{prop:solution} in the main body of the paper.
  
  \begin{proposition} \label{prop:solution-reverse}
    Let q({\bf z}) be multivariate Gaussian with mean $\tilde{\boldsymbol \nu}$ and diagonal covariance $\tilde{\boldsymbol \Psi}$.
    Then the variational parameters minimizing the reverse KL-divergence, $\text{KL}(p||q)$ are given by $\tilde{\boldsymbol \nu} = \boldsymbol \mu$ and
    \begin{equation}
        \tilde{\Psi}_{ii} = \Sigma_{ii}.
    \end{equation}
  \end{proposition}

  \begin{proof}
      The variational parameters $\tilde{\boldsymbol \nu}$ and $\tilde{\boldsymbol \Psi}$ are estimated by minimizing the reverse KL-divergence
      \begin{equation}
          KL(p||q) = \mathbb E_p [\log p({\bf z})] - \mathbb E_p [\log q({\bf z})],
      \end{equation}
      where each expectation is taken with respect to the measure $p$.
      The first term on the R.H.S does not depend on the variational parameters.
      The second term is
  {\small
  \begin{eqnarray*}
    - \mathbb E_p [\log q({\bf z})] = \frac{1}{2} \log |\tilde{\boldsymbol \Psi}| + \frac{1}{2}\mathbb E_p ({\bf z} - \tilde{\boldsymbol \nu})^T \tilde{\boldsymbol \Psi}^{-1} ({\bf z} - \tilde{\boldsymbol \nu}) \\
    = \frac{1}{2} \sum_{i = 1}^n \log \tilde \Psi_{ii} +
      \frac{1}{\tilde \Psi_{ii}} \left (\Sigma_{ii} + (\mu_i - \tilde \nu_i)^2 \right).
  \end{eqnarray*}
  }
  This expression is minimized by setting $\tilde \nu_i = \mu_i$ and moreover $\tilde{\boldsymbol \nu} = \boldsymbol \mu$.
  Differentiating with respect to $\tilde \Psi_{ii}$ and solving at a stationary point, we then have $\tilde \Psi_{ii} = \Sigma_{ii}$.
  \end{proof}
 
  This next theorem, obtained when minimizing the reverse KL-divergence, is the counterpart to Theorem~\ref{thm:constant-off-diag}.

  \begin{theorem}  \label{thm:constant-off-diag2}
    Suppose ${\bf C}$ has constant off-diagonal terms, $\varepsilon \in [0, 1)$.
    When minimizing the reverse KL-divergence, the entropy gap goes to a constant factor in the limit $n\rightarrow\infty$, whereas the variance is correctly estimated, that is
        \begin{align}
            \underset{n \to \infty} \lim \ \tfrac{1}{n} \left(\mathcal H(p) - \mathcal H(q)\right) &= \log(1 - \varepsilon), \\
            \tilde \Psi_{ii} &= \Sigma_{ii}.
        \end{align}
      \end{theorem}

  \begin{proof}
    The second equality is already stated in Proposition~\ref{prop:solution-reverse}.
    Note that this result does not depend on the specifics of the example at hand and applies to any covariance matrix.
  
    To obtain the first equality, we start with the shrinkage-delinkage decomposition,
    \begin{equation}
        \mathcal H(p) - \mathcal H(q) = \frac{1}{2} \log |{\bf S}| + \frac{1}{2} \log |{\bf C}|.
    \end{equation}
    Since there is no shrinkage, $\log |{\bf S}| = 0$.
    Next, recall from \eqref{eq:logC} the expression for $\log |{\bf C}|$.
    Dividing $\log |{\bf C}|$ by $n$ and taking the limit in $n$, we obtain the desired result.
  \end{proof}
  Naturally, the entropy gap can be arbitrarily large, with $q$ having a larger entropy than $p$, notably as $\varepsilon$ goes to 1.

  For problems where the marginal variances are of interest, rather than the entropy, we would ideally minimize the reverse KL-divergence.
  Unfortunately, there is usually no efficient way to optimize $\text{KL}(p||q)$, due to the difficulty in evaluating expectation values with respect to $p$.

  \section{Solutions for the bounds on the shrinkage and delinkage terms}
  \label{app:alg}

  By exploiting the symmetries that we proved in Section 4.2, we can efficiently compute bounds on the terms $\log |\mathbf{S}|$ and $\log|\mathbf{C}|$; these are the terms that arise, respectively, from the effects of shrinkage and delinkage.

  \subsection{Upper bound on $\log |{\bf S}|$} 
  
  First we show how to compute the upper bound on $\log |\mathbf{S}|$. Recall that to do so, we must solve the optimization problem
  \begin{equation}
      \max_{\boldsymbol\lambda\in\Lambda_R} \sum_{i=1}^n \lambda_i^{-1}.
      \label{eq:optS}
  \end{equation}
 From Lemma 4.2, we know that all the elements of the solution assume the edge values of $\lambda_1$ or $\lambda_n$
  save for at most one which we denote $\lambda_k$. At a high level, we solve the optimization by
 exhaustively computing the optimal solution for each candidate value of $k \in \{1,\ldots, n\}$,
  then choosing the particular value of $k$ whose solution maximizes the overall objective function.

  It remains only to show how to compute the solution for a particular candidate value of $k$.
  Recall the constraints that $\sum_{i = 1}^n \lambda_i = n$ and $\lambda_1 = R \lambda_n$.
 It follows that
  \begin{equation}
      \lambda_k = n - \left [ (k - 1) R + n - k \right] \lambda_n.
  \end{equation}
  Using the constraints to eliminate $\lambda_1$ and $\lambda_k$, we can write the
 objective function entirely in terms of $\lambda_n$. In this way we find
  {\small
  \begin{equation} \label{eq:objective-simple}
      \sum_{i = 1}^n \lambda_i^{-1} = \frac{1}{n - \left [ R (k - 1) + n - k \right] \lambda_n} + \frac{(k - 1)}{R \lambda_n} + \frac{n - k}{\lambda_n}.
  \end{equation}
  }
Crucially, we also need to enforce the boundary conditions $\lambda_n \le \lambda_k \le \lambda_1$, or equivalently
  \begin{equation} \label{eq:boundary}
      \frac{n}{R k + n - k} \le \lambda_n \le \frac{n}{R (k - 1) + n - k + 1}.
  \end{equation}
  Note that the simplified objective in \eqref{eq:objective-simple} for fixed $k$ is convex in $\lambda_n$; hence the maximizer must lie at one of the boundary values in \eqref{eq:boundary}. By computing the objective for each boundary value of $\lambda_k$, we find the optimal solution for this candidate value of~$k$. Finally, we obtain the overall solution to eq.~(\ref{eq:optS}) by considering all $n$ candidate values of $k$ and choosing the best one.

\subsection{Upper Bound on $\log |{\bf C}|$}

Next we show how to compute the upper bound on $\log|\mathbf{C}|$. Recall that to do so,
we must solve the optimization problem
  \begin{equation}
      \max_{\boldsymbol\lambda\in\Lambda_R}\left[\sum_{i=1}^n \log \lambda_i\right].
  \end{equation}
  From Lemma~4.3, we know that all eigenvalues other than~$\lambda_1$ and $\lambda_n$ must have the same value; we denote this value  by~$\lambda_k$.
  From the constraint $\sum_{i = 1}^n \lambda_i = n$, it follows that
  \begin{equation}
      \lambda_k = \frac{n - (1 + R) \lambda_n}{n - 2}.
  \end{equation}
  Again, using the constraints to eliminate $\lambda_1$ and $\lambda_k$, we can write the objective function entirely in terms of $\lambda_n$. In this way we find
  \begin{equation}
      \sum_{i=1}^n \log \lambda_i = (n\! -\! 2) \log \frac{n - (1 + R) \lambda_n}{n - 2} + \log R \lambda_n + \log \lambda_n.
      \label{eq:concave}
  \end{equation}
This objective is concave in $\lambda_n$, so we can locate the maximum by setting its derivative with respect to $\lambda_n$ equal to zero. Some straightforward algebra shows that this derivative vanishes when
  \begin{equation}
      \lambda_n = \frac{2}{1+R}.
  \end{equation}
Finally we need to check that this solution does not violate the boundary conditions of the problem; in particular, we require that \mbox{$\lambda_n \ge \lambda_k \ge R \lambda_n$}, or equivalently that
  \begin{equation}
      \frac{n}{1 + R (n - 1)} \le \lambda_n \le \frac{n}{n - 1 + R}.
  \end{equation}
These conditions are always satisfied for $n\geq 3$.
  Hence we obtain an analytical solution for the upper bound on $\log|\mathbf{C}|$.
  Finally, note that while the solution for $\lambda_n$ does not depend on $n$,
  the optimized objective function does depend on $n$ through eq.~(\ref{eq:concave}).

 Algorithm~\ref{alg:bounds} provides an implementation of the above-described method.

\begin{algorithm}[!b]
    \DontPrintSemicolon
    \caption{Upper bounds on $\log|\mathbf{S}|$ and $\log|\mathbf{C}|$}
    \label{alg:bounds}
    \setstretch{1}
    {\bf Input:} $R, n$ \;\;

    \SetKwFunction{Fh}{ObjF}
    \SetKwProg{Fn}{Function}{:}{}
    \Fn{\Fh{$\lambda_n$, $k$}}{
        \KwRet $ \left (n - k + \frac{k - 1}{R} \right)\frac{1}{\lambda_n} + \frac{1}{n - [R(k - 1) + n - k] \lambda_n}$
    } \;

    \For {$k$ in $\{2, \cdots, n - 1\}$} {
      $\lambda_a \leftarrow \frac{n}{Rk + n - k}$ \;
      $\lambda_b \leftarrow \frac{n}{R(k-1) + n-k + 1}$ \;
      $F_k \leftarrow \text{max}(\text{\texttt{ObjF}} \;(\lambda_a), \text{\texttt{ObjF}}(\lambda_b))$ \;
      {\bf if} ($k = 1$) $F \leftarrow F_k$ \;
      {\bf else} $F \leftarrow \text{max}(F, F_k)$\;
    }
    $U_s \leftarrow n\log(F/n)$\;\;
    
    $\lambda_n \leftarrow \frac{2}{1+R}$ \;
    $U_c \leftarrow  \log \frac{1}{\lambda_n} + \log\frac{1}{R \lambda_n} + (n\!-\!2) \log \frac{n-2}{n-(1+R)\lambda_n}$\; \;

    {\bf Return:} $U_s$, $U_c$\;
  \end{algorithm}
  

\section{Bounds on the average variance shrinkage}
\label{app:traceS}

We can also derive bounds on the {\it average} shrinkage in componentwise variance in terms of the problem dimensionality, $n$, and the condition number, $R$, of the correlation matrix. The average in this case is performed over the different components of $\mathbf{z}$. Recall that the shrinkage in each componentwise variance is given by $S_{ii} = \Sigma_{ii}/\Psi_{ii}$. Hence we can also express this bound in terms of the trace of the shrinkage matrix, $\text{trace}({\bf S})$.
\begin{proposition}
Suppose that the correlation matrix $\mathbf{C}$ has condition number $R$. Then the solution for FG-VI in section 2 satisfies
\begin{equation}
 \min_{\lambda\in\Lambda_R}\sum_{i=1}^n \lambda_i^{-1} \leq {\rm trace}(\mathbf{S}) \leq \max_{\lambda\in\Lambda_R}\sum_{i=1}^n \lambda_i^{-1},
\label{eq:shrink-eig-bound}
\end{equation}
where $\Lambda_R$ is the set defined in section 4.
\end{proposition}
\begin{proof}
We showed in the proof of Theorem 3.1 that 
\begin{equation}
    S_{ii} = \frac{\Sigma_{ii}}{\Psi_{ii}}= C_{ii}^{-1}.
\end{equation}
It follows that ${\rm trace}(\mathbf{S}) = {\rm trace}(\mathbf{C}^{-1}) = \sum_i \lambda_i^{-1}$, where $\lambda_1,\ldots,\lambda_n$ are the eigenvalues of $\mathbf{C}$. The bound then follows from the relaxtion from the set $\mathcal{C}_R$ to the set $\Lambda_R$ in section~4.
\end{proof}

\subsection{Lower bound on $\text{trace}({\bf S})$}

The optimization implied by eq.~(\ref{eq:shrink-eig-bound}) is convex, since both the set $\Lambda_R$ and the objective function $\sum_i \lambda_i^{-1}$ are convex. In fact, this bound can be evaluated in closed form by using similar methods as in section 4.2.

  \begin{lemma} \label{lemma:trace-symmetry1}
  Let $\boldsymbol\lambda\in\Lambda_R$ be the solution that minimizes the left side of eq.~(\ref{eq:shrink-eig-bound}). Then $\lambda_i\!=\!\lambda_j$ whenever $1\!<\!i\!<\!j\!<\!n$.
\end{lemma}
\begin{proof}
This proof follows the same argument as the proof (by contradiction) for Lemma 4.3. 
Suppose there exists a solution with intermediate elements $\lambda_i$ and $\lambda_j$ that satisfy
$\lambda_1\! \geq\! \lambda_i\! >\! \lambda_j\! \geq\! \lambda_n$.
Consider the effect on this solution of a perturbation that adds some small amount $\delta\!>\!0$ to~$\lambda_j$ and subtracts the same amount from~$\lambda_i$. For sufficiently small~$\delta$, this perturbation will not leave the set~$\Lambda_R$; however, it will {\it diminish} the separation of~$\lambda_i$ from~$\lambda_j$. As a result the objective $\sum_k (1/\lambda_k)$ experiences a change
\begin{equation}
  g(\delta) = \frac{1}{\lambda_i-\delta} -\frac{1}{\lambda_i} +\frac{1}{\lambda_j+\delta} - \frac{1}{\lambda_j}.
  \end{equation}
Evaluating the derivative, we find $g'(0) = \lambda_i^{-2}\! -\! \lambda_j^{-2} < 0$, so that the objective function is decreased for some $\delta>0$. As before this yields a contradiction, because any solution must be maximal, and hence stationary (i.e., $g'(0)\!=\!0$), with respect to small perturbations.
\end{proof}
With the above lemma, the $n$-dimensional optimization over $\Lambda_R$ can be reduced to a one-dimensional optimization that can be solved in closed form. The methods are identical to those in the previous appendix.

First we rewrite the constraint, $\lambda_n \le \lambda_k \le R \lambda_n$, as
\begin{equation}  \label{eq:trace-constraint}
    \frac{n}{R(n - 1) + 1} \le \lambda_n \le \frac{n}{R  + n - 1}.
\end{equation}
Since the minimization problem is convex, a minima can be found at a stationary point of the objective function
\begin{equation}  \label{eq:trace-objective}
    \sum_{i = 1}^n \frac{1}{\lambda_i} = \frac{(n - 2)^2}{n - (1 + R) \lambda_n} + \frac{1}{\lambda_n} + \frac{1}{R \lambda_n},
\end{equation}
which now only depends on $\lambda_n$.
Differentiating and setting to 0, we obtain the root-finding problem,
\begin{equation}
    \left [R(n - 2)^2 - (1 + R)^2 \right] \lambda^2_n + 2n (1 + R) \lambda_n - n^2 = 0,
\end{equation}
which can be solved exactly.
It remains to check whether the roots violate the constraints in~\eqref{eq:trace-constraint}, and pick the non-offending root which maximizes the objective in~\eqref{eq:trace-objective}.
If both roots violate the constraints then, by convexity of the problem, the solution must lie at one of the boundary terms in~\eqref{eq:trace-constraint}.

\subsection{Upper bound on \text{trace}(S)}

A similar approach gives us an upper bound on $\text{trace}({\bf S})$.
In fact, we have already solved the problem of maximizing the right side of~\eqref{eq:shrink-eig-bound} when upper-bounding $\log |{\bf S}|$.
It remains to apply the same strategy.

\section{Tighter upper bound on entropy gap}
\label{app:KL}
In Proposition 4.1 we derived separate upper bounds on the individual terms $\log|\mathbf{S}|$ and $\log|\mathbf{C}|$. One upper bound on the entropy gap (or equivalently, on ${\rm KL}(q,p)$) is obtained simply by adding these separate bounds. However, a tighter upper bound is obtained by replacing the separate optimizations in Proposition 4.1 by a single joint optimization:
\begin{equation}
{\rm KL}(q,p)\ \leq\ 
  \frac{1}{2}\max_{\boldsymbol\lambda\in\Lambda_R}\left[
    n \log \frac{1}{n}
  \sum_{i=1}^n \lambda_i^{-1} + \sum_{i=1}^n \log \lambda_i\right].
\label{eq:combined-opt}
\end{equation}
In this appendix we sketch how to solve this optimization and evaluate this bound in closed form. The first step is to make the change of variables,
\begin{equation}
    \omega_i = \frac{\lambda_i^{-1}}{\sum_{j=1}^n \lambda_j^{-1}},
\end{equation}
and to translate the domain of optimization accordingly. Under this change of variables, the original domain $\Lambda_R$ in section 4 is mapped onto the set
\begin{equation}
    \Omega_R = \left\{\boldsymbol\omega\in \mathbb R_+^n\, |\,\omega_n\geq\ldots\geq\omega_1 = \frac{1}{R} \omega_n, \sum_{i=1}^n\omega_i = 1\right\}.
\end{equation}
Likewise, a little algebra shows that the optimization in eq.~(\ref{eq:combined-opt}) is equivalent to the following:
\begin{equation}
{\rm KL}(q,p)\ \leq\ 
  \frac{1}{2}\max_{\boldsymbol\omega\in\Omega_R}\left[
    \sum_{i=1}^n \log\frac{1}{\omega_i} - n\log n\right].
\label{eq:omega-opt}
\end{equation}
Now we can make a similar argument as in the proof of Lemma 4.2 to simplify this optimization.
\begin{lemma} 
\label{lemma:symmetry-omega}
  Let $\boldsymbol\omega\in\Omega_R$ be the solution that maximizes the right side of eq.~(\ref{eq:omega-opt}). Then at most one $\omega_i$ is not equal to either $\omega_1$ or~$\omega_n$.
\end{lemma}
\begin{proof}
We prove the lemma by contradiction. Suppose there exists a solution with intermediate elements $\omega_i$ and $\omega_j$ that satisfy
$\omega_n\! >\! \omega_i\! >\! \omega_j\! >\! \omega_1$.
Consider the effect on this solution of a perturbation that adds some small amount $\delta\!>\!0$ to $\omega_i$ and subtracts the same amount from $\omega_j$. Note that for sufficiently small~$\delta$, this perturbation will not leave the set~$\Omega_R$; however, it will {\it expand} the separation of $\omega_i$ from~$\omega_j$. As a result the objective in eq.~(\ref{eq:omega-opt}) changes by an amount
\begin{equation}
  f(\delta) = \frac{1}{2}\left[\log\frac{1}{\omega_i\!+\!\delta} - \log\frac{1}{\omega_i} + \log\frac{1}{\omega_j\!-\!\delta} - \log\frac{1}{\omega_j}\right].
\end{equation}
Next we evaluate the derivative $f'(\delta)$ at $\delta=0$; doing so we find $f'(0) = \omega_j^{-1}\! -\! \omega_i^{-1} > 0$, so that the objective is increased for some $\delta>0$. But this yields a contradiction, because any solution must be maximal, and hence stationary (i.e., $f'(0)\!=\!0$), with respect to small perturbations.
\end{proof}
With the above lemma, we can reduce the $n$-dimensional optimization over $\Omega_R$ to a one-dimensional optimization that can be solved in closed form; the methods are identical to those in the previous appendix.

In details, let $\omega_k$ be the one variable which (potentially) does not go to $\omega_1$ or $\omega_n$.
Given $\sum_{i = 1}^n \omega_i = 1$,
\begin{equation}
    \omega_k = 1 - (k - 1 + R(n -k)) \omega_1.
\end{equation}
The objective is then
\begin{eqnarray}
    \sum_{i = 1}^n \log \frac{1}{\omega_i} =&  - (k - 1) \log \omega_1 - (n - k) \log R\omega_1 \nonumber \\ & - \log (1 - [k - 1 + R(n -k) \omega_1]). \ 
\end{eqnarray}
Since we are trying to maximize a convex function, the solution does not lie at a stationary point, rather at a boundary set by the constraint, $\omega_1 \le \omega_k \le \omega_n$, or equivalently
\begin{equation}
    \frac{1}{k - 1 + R(n - k + 1} \le \omega_1 \le \frac{1}{k + R(n -k)}.
\end{equation}
It remains to test each candidate boundary for each choice of $k$ to obtain a maximizer.

\end{document}


\maketitle

In appendix~\ref{app:cov}, we provide further details on FG-VI when the covariance matrix has constant off-diagonal terms. Next appendix~\ref{app:alg} provides an algorithm to efficiently compute bounds on the shrinkage and delinkage terms, using techniques developped in section 4.
In appendix~\ref{app:traceS}, we show how to derive bounds on the trace of the shrinkage matrix.
In appendix~\ref{app:KL}, we obtain an upper bound on the KL divergence---equivalently the entropy gap---between $q$ and~$p$.


\appendix
\section{FG-VI for covariance with constant off-diagonal terms}
\label{app:cov}

Let $p$ be a multivariate Gaussian distribution over $\mathbb{R}^n$ with covariance matrix $\boldsymbol\Sigma$, and let $q$ be the solution of FG-VI with diagonal covariance matrix $\boldsymbol\Psi$, where
\begin{equation}
    \Psi_{ii} = \frac{1}{\Sigma^{-1}_{ii}}
\label{eq:psi}
\end{equation} as in eq.~(4). The elements of the correlation matrix are related to those of the covariance matrix by
\begin{equation}
    C_{ij}=\frac{\Sigma_{ij}}{\sqrt{\Sigma_{ii}\Sigma_{jj}}}.
\label{eq:corr}
\end{equation} 
In this section we assume that the correlation matrix has constant off-diagonal terms, and we use $\varepsilon\in[0,1)$ to denote the value of these terms. Note that we require $\varepsilon\geq 0$ since three or more random variables cannot all be mutually anti-correlated. Also we require $\varepsilon\!<\!1$ since otherwise $\mathbf{C}$ (and hence $\boldsymbol\Sigma$) would not be positive-definite.

We now prove Theorem~3.5, broken up into a statement about the estimated variance (Proposition~\ref{prop:asymptotic-shrinkage}) and a statement about the entropy gap (Proposition~\ref{prop:asymptotic-entropy}).

\begin{proposition} \label{prop:asymptotic-shrinkage}
    If the correlation matrix in eq.~(\ref{eq:corr}) has constant off-diagonal terms, then the solution for FG-VI in eq.~(\ref{eq:psi}) obeys the following limits:
\begin{eqnarray}
   \underset{n \to \infty}{\lim} \Psi_{ii} & = & (1\! -\! \varepsilon)\Sigma_{ii} \\
    \underset{\varepsilon \to 1}{\lim} \Psi_{ii} & = & 0 \\
    \underset{n \to \infty}{\lim} \frac{1}{n} \mathrm{trace}({\bf S}) & = & \frac{1}{1 - \varepsilon}
 \end{eqnarray}
 where $\varepsilon\in[0,1)$ denotes the value of $C_{ij}$ for $i\neq j$.
\end{proposition}

\begin{proof}
Let $\mathbf{1}\in\mathbb{R}^n$ denote the vector of all ones. Then the correlation matrix can be written as
 \begin{equation}
      \mathbf{C} = (1 - \varepsilon) \mathbf{I} + \varepsilon {\bf 1} {\bf 1}^\top.
      \label{eq:corr-epsilon}
  \end{equation}
  One can verify by direct substitution that the inverse correlation matrix has elements
  \begin{equation}
      \mathbf{C}^{-1} = \tfrac{1}{1 - \varepsilon}\left[{\bf I} - \tfrac{\varepsilon}{1 + (n - 1) \varepsilon} {\bf 1}{\bf 1}^\top\right].
   \label{eq:Cinv}
  \end{equation}
Recall from eq.~(15) that $\Psi_{ii} = \Sigma_{ii}/C_{ii}^{-1}$. With some algebra, it follows from eq.~(\ref{eq:Cinv}) that
\begin{equation}
    \Psi_{ii} = \left[\frac{(1-\varepsilon)(1+(n\!-\!1)\varepsilon)}{1+(n\!-\!2)\varepsilon}\right]\Sigma_{ii}.
    \label{eq:psi-epsilon}
\end{equation}
It is straightforward to take the limits of eq.~(\ref{eq:psi-epsilon}) as $n\rightarrow\infty$ or $\varepsilon\rightarrow 1$, and these limits yield the results of the proposition.

Finally, recalling that $\text{trace}({\bf S}) = \sum_{i = 1}^n \Sigma_{ii} / \Psi_{ii}$, we immediately get
\begin{equation}
    \underset{n \to \infty}{\lim} \frac{1}{n} \text{trace}({\bf S}) = \frac{1}{1 - \varepsilon}.
\end{equation}
\end{proof}

 \begin{proposition}  \label{prop:asymptotic-entropy}
     If the correlation matrix in eq.~(\ref{eq:corr}) has constant off-diagonal terms, then the per-component entropy gap from FG-VI vanishes in the limit $n\rightarrow\infty$; that is,
      \begin{equation}
          \underset{n \to \infty}{\lim} \ \frac{1}{n} \left [ \mathcal H(p) - \mathcal H(q) \right ] = 0.
      \end{equation}
  \end{proposition}

\begin{proof}
Recall from Theorem 3.2 of the main paper that the entropy gap for FG-VI is given by
\begin{equation}
    \mathcal{H}(p)-\mathcal{H}(q) = \tfrac{1}{2}\log|\mathbf{S}| + \tfrac{1}{2}\log|\mathbf{C}|,
    \label{eq:gap}
\end{equation}
where $\mathbf{S}$ is the diagonal shrinkage matrix with elements $S_{ii} = \Sigma_{ii}/\Psi_{ii}$. We consider each term on the right side of this equation in turn. It follows at once from eq.~(\ref{eq:psi-epsilon}) that
\begin{equation}
\log|\mathbf{S}| = n\left[\log\frac{1+(n\!-\!2)\varepsilon}{(1\!-\!\varepsilon)(1+(n\!-\!1)\varepsilon)}\right],
\label{eq:logS}
\end{equation}
where $\varepsilon\!>\!0$ denotes the amount of off-diagonal correlation. Next we show how to evaluate $\log|\mathbf{C}|$. From eq.~(\ref{eq:corr-epsilon}), we rewrite the correlation matrix as
   \begin{equation}
        \mathbf{C} = (1\! -\! \varepsilon) {\bf I} + n \varepsilon \left (\tfrac{1}{\sqrt n} {\bf 1} \right) \left (\tfrac{1}{\sqrt n} {\bf 1} \right)^T.
        \label{eq:eigC}
    \end{equation}
    %
    Note that the second term on the right side of eq.~(\ref{eq:eigC}) is a rank-one matrix whose one nonzero eigenvalue is equal to $n\varepsilon$. By adding the first term---which is a multiple of the identity matrix---we obtain a new matrix whose eigenvalues are shifted by a uniform amount. It follows that this new matrix (namely, $\mathbf{C}$) has $n\!-\!1$ eigenvalues at $1\!-\!\varepsilon$ and one eigenvalue at $1+(n\!-\!1)\varepsilon$, so that
    \begin{equation}
        \log|\mathbf{C}| = (n\!-\!1)\log(1\!-\!\varepsilon) + \log (1 + (n\!-\!1) \varepsilon).
        \label{eq:logC}
    \end{equation}
The entropy gap is related to the sum of $\log|\mathbf{S}|$ and $\log|\mathbf{C}|$ by eq.~(\ref{eq:gap}). Adding the results in eq.~(\ref{eq:logS}) and eq.~(\ref{eq:logC}), we find that
\begin{align}
\log|\mathbf{S}| + \log|\mathbf{C}|
    &=  -\log(1\! -\! \varepsilon) + \log (1 + (n\! -\! 1) \varepsilon) \nonumber \\
    &\mbox{\hspace{2ex}}\hspace{3ex} +\ n \log \left [ \tfrac{1 + (n - 2) \varepsilon}{1 + (n - 1) \varepsilon} \right].
    \label{eq:sum}
\end{align}
Note that the first term on the right side is $\mathcal O(1)$, the second term is $\mathcal O(\log n)$, and the third term can be 
written as
    \begin{equation}
        n \log \left [ \tfrac{1 + (n - 2) \varepsilon}{1 + (n -1) \varepsilon} \right]
          = n \log \left [ 1 - \tfrac{\varepsilon}{1 + (n - 1) \varepsilon}  \right].
    \end{equation}
    %
    For large $n$, the log term in this equation is $\mathcal O(\frac{1}{n})$ so that the entire expression is $\mathcal O(1)$. From eq.(\ref{eq:sum}), it therefore follows that the entropy gap in eq.~(\ref{eq:gap}) is $\mathcal O(\log n)$. Dividing by $n$, we see that the per-component entropy gap vanishes in the limit $n\rightarrow\infty$, thereby completing the proof.
  \end{proof}

  From \eqref{eq:sum}, we also see that the entropy gap becomes infinite as $\varepsilon \to 1$ (for fixed $n$).
  Additional limits can be considered with respect to both $\varepsilon$ and $n$, but we do not pursue those here.
  
  \section{Solutions for the bounds on the shrinkage and delinkage terms}
  \label{app:alg}

  By exploiting the symmetries that we proved in Section 4.2, we can efficiently compute bounds on the terms $\log |\mathbf{S}|$ and $\log|\mathbf{C}|$; these are the terms that arise, respectively, from the effects of shrinkage and delinkage.

  \subsection{Upper bound on $\log |{\bf S}|$} 
  
  First we show how to compute the upper bound on $\log |\mathbf{S}|$. Recall that to do so, we must solve the optimization problem
  %
  \begin{equation}
      \max_{\boldsymbol\lambda\in\Lambda_R} \sum_{i=1}^n \lambda_i^{-1}.
      \label{eq:optS}
  \end{equation}
  %
 From Lemma 4.2, we know that all the elements of the solution assume the edge values of $\lambda_1$ or $\lambda_n$
  save for at most one which we denote $\lambda_k$. At a high level, we solve the optimization by
 exhaustively computing the optimal solution for each candidate value of $k \in \{1,\ldots, n\}$,
  then choosing the particular value of $k$ whose solution maximizes the overall objective function.

  It remains only to show how to compute the solution for a particular candidate value of $k$.
  Recall the constraints that $\sum_{i = 1}^n \lambda_i = n$ and $\lambda_1 = R \lambda_n$.
 It follows that
  \begin{equation}
      \lambda_k = n - \left [ (k - 1) R + n - k \right] \lambda_n.
  \end{equation}
  %
  Using the constraints to eliminate $\lambda_1$ and $\lambda_k$, we can write the
 objective function entirely in terms of $\lambda_n$. In this way we find
  {\small
  \begin{equation} \label{eq:objective-simple}
      \sum_{i = 1}^n \lambda_i^{-1} = \frac{1}{n - \left [ R (k - 1) + n - k \right] \lambda_n} + \frac{(k - 1)}{R \lambda_n} + \frac{n - k}{\lambda_n}.
  \end{equation}
  }
  %
Crucially, we also need to enforce the boundary conditions $\lambda_n \le \lambda_k \le \lambda_1$, or equivalently
  %
  \begin{equation} \label{eq:boundary}
      \frac{n}{R k + n - k} \le \lambda_n \le \frac{n}{R (k - 1) + n - k + 1}.
  \end{equation}
  %
  Note that the simplified objective in \eqref{eq:objective-simple} for fixed $k$ is convex in $\lambda_n$; hence the maximizer must lie at one of the boundary values in \eqref{eq:boundary}. By computing the objective for each boundary value of $\lambda_k$, we find the optimal solution for this candidate value of~$k$. Finally, we obtain the overall solution to eq.~(\ref{eq:optS}) by considering all $n$ candidate values of $k$ and choosing the best one.

\subsection{Upper Bound on $\log |{\bf C}|$}

Next we show how to compute the upper bound on $\log|\mathbf{C}|$. Recall that to do so,
we must solve the optimization problem
  \begin{equation}
      \max_{\boldsymbol\lambda\in\Lambda_R}\left[\sum_{i=1}^n \log \lambda_i\right].
  \end{equation}
  %
  From Lemma~4.3, we know that all eigenvalues other than~$\lambda_1$ and $\lambda_n$ must have the same value; we denote this value  by~$\lambda_k$.
  From the constraint $\sum_{i = 1}^n \lambda_i = n$, it follows that
  \begin{equation}
      \lambda_k = \frac{n - (1 + R) \lambda_n}{n - 2}.
  \end{equation}
  Again, using the constraints to eliminate $\lambda_1$ and $\lambda_k$, we can write the objective function entirely in terms of $\lambda_n$. In this way we find
  \begin{equation}
      \sum_{i=1}^n \log \lambda_i = (n\! -\! 2) \log \frac{n - (1 + R) \lambda_n}{n - 2} + \log R \lambda_n + \log \lambda_n.
      \label{eq:concave}
  \end{equation}
  %
This objective is concave in $\lambda_n$, so we can locate the maximum by setting its derivative with respect to $\lambda_n$ equal to zero. Some straightforward algebra shows that this derivative vanishes when
  \begin{equation}
      \lambda_n = \frac{2}{1+R}.
  \end{equation}
  %
Finally we need to check that this solution does not violate the boundary conditions of the problem; in particular, we require that \mbox{$\lambda_n \ge \lambda_k \ge R \lambda_n$}, or equivalently that
  \begin{equation}
      \frac{n}{1 + R (n - 1)} \le \lambda_n \le \frac{n}{n - 1 + R}.
  \end{equation}
  %
These conditions are always satisfied for $n\geq 3$.
  Hence we obtain an analytical solution for the upper bound on $\log|\mathbf{C}|$.
  Finally, note that while the solution for $\lambda_n$ does not depend on $n$,
  the optimized objective function does depend on $n$ through eq.~(\ref{eq:concave}).

 Algorithm~\ref{alg:bounds} provides an implementation of the above-described method.

\begin{algorithm}[!b]
    \DontPrintSemicolon
    \caption{Upper bounds on $\log|\mathbf{S}|$ and $\log|\mathbf{C}|^{-1}$}
    \label{alg:bounds}
    \setstretch{1}
    {\bf Input:} $R, n$ \;\;

    \SetKwFunction{Fh}{ObjF}
    \SetKwProg{Fn}{Function}{:}{}
    \Fn{\Fh{$\lambda_n$, $k$}}{
        \KwRet $ \left (n - k + \frac{k - 1}{R} \right)\frac{1}{\lambda_n} + \frac{1}{n - [R(k - 1) + n - k] \lambda_n}$
    } \;

    \For {$k$ in $\{2, \cdots, n - 1\}$} {
      $\lambda_a \leftarrow \frac{n}{Rk + n - k}$ \;
      $\lambda_b \leftarrow \frac{n}{R(k-1) + n-k + 1}$ \;
      $F_k \leftarrow \text{max}(\text{\texttt{ObjF}} \;(\lambda_a), \text{\texttt{ObjF}}(\lambda_b))$ \;
      {\bf if} ($k = 1$) $F \leftarrow F_k$ \;
      {\bf else} $F \leftarrow \text{max}(F, F_k)$\;
    }
    $U_s \leftarrow n\log(F/n)$\;\;
    
    $\lambda_n \leftarrow \frac{2}{1+R}$ \;
    $U_c \leftarrow  \log \frac{1}{\lambda_n} + \log\frac{1}{R \lambda_n} + (n\!-\!2) \log \frac{n-2}{n-(1+R)\lambda_n}$\; \;

    {\bf Return:} $U_s$, $U_c$\;
  \end{algorithm}
  

\section{Bounds on the average variance shrinkage}
\label{app:traceS}

We can also derive bounds on the {\it average} shrinkage in componentwise variance in terms of the problem dimensionality, $n$, and the condition number, $R$, of the correlation matrix. The average in this case is performed over the different components of $\mathbf{z}$. Recall that the shrinkage in each componentwise variance is given by $S_{ii} = \Sigma_{ii}/\Psi_{ii}$. Hence we can also express this bound in terms of the trace of the shrinkage matrix, $\text{trace}({\bf S})$.
\begin{proposition}
Suppose that the correlation matrix $\mathbf{C}$ has condition number $R$. Then the solution for FG-VI in section 2 satisfies
\begin{equation}
 \min_{\lambda\in\Lambda_R}\sum_{i=1}^n \lambda_i^{-1} \leq {\rm trace}(\mathbf{S}) \leq \max_{\lambda\in\Lambda_R}\sum_{i=1}^n \lambda_i^{-1},
\label{eq:shrink-eig-bound}
\end{equation}
\end{proposition}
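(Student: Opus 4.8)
The plan is to reduce the statement to a near-triviality by showing that $\mathrm{trace}(\mathbf{S})$ equals a sum of reciprocal eigenvalues of the correlation matrix $\mathbf{C}$, and that the eigenvalue vector of $\mathbf{C}$ is itself a feasible point of $\Lambda_R$.

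First I would rewrite each diagonal shrinkage factor in terms of $\mathbf{C}$. From the FG-VI solution in eq.~(\ref{eq:psi}) we have $S_{ii} = \Sigma_{ii}/\Psi_{ii} = \Sigma_{ii}\,\Sigma^{-1}_{ii}$. Writing $\boldsymbol\Sigma = \mathbf{D}^{1/2}\mathbf{C}\,\mathbf{D}^{1/2}$ with $\mathbf{D} = \mathrm{diag}(\Sigma_{11},\dots,\Sigma_{nn})$, so that $\boldsymbol\Sigma^{-1} = \mathbf{D}^{-1/2}\mathbf{C}^{-1}\mathbf{D}^{-1/2}$ and hence $\Sigma^{-1}_{ii} = (\mathbf{C}^{-1})_{ii}/\Sigma_{ii}$, the diagonal rescaling cancels and we obtain $S_{ii} = (\mathbf{C}^{-1})_{ii}$. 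Summing over $i$ gives $\mathrm{trace}(\mathbf{S}) = \mathrm{trace}(\mathbf{C}^{-1}) = \sum_{i=1}^n \mu_i^{-1}$, where $\mu_1 \geq \cdots \geq \mu_n > 0$ denote the eigenvalues of $\mathbf{C}$.

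Second I would verify that the eigenvalue vector $\boldsymbol\mu = (\mu_1,\dots,\mu_n)$ lies in $\Lambda_R$. The $\mu_i$ are strictly positive because $\mathbf{C}$ is positive-definite; they sum to $n$ because $\mathbf{C}$ is a correlation matrix with unit diagonal, so $\mathrm{trace}(\mathbf{C}) = n$; and $\mu_1/\mu_n = R$ by the hypothesis that $\mathbf{C}$ has condition number $R$. These are exactly the constraints defining $\Lambda_R$.

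Finally, since $\sum_i \mu_i^{-1}$ is the value of the objective $\sum_i \lambda_i^{-1}$ evaluated at the feasible point $\boldsymbol\mu \in \Lambda_R$, it is bracketed by the minimum and the maximum of that objective over $\Lambda_R$, which is precisely eq.~(\ref{eq:shrink-eig-bound}). There is no substantive obstacle here; the only step requiring a little care is the diagonal-rescaling identity $S_{ii} = (\mathbf{C}^{-1})_{ii}$, and checking that the feasibility argument invokes the correct normalization ($\mathrm{trace}(\mathbf{C}) = n$) that is built into the definition of $\Lambda_R$.
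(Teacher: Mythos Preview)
Your proposal is correct and follows essentially the same route as the paper: establish $S_{ii}=(\mathbf{C}^{-1})_{ii}$ so that $\mathrm{trace}(\mathbf{S})=\sum_i \lambda_i^{-1}$ for the eigenvalues of $\mathbf{C}$, then observe that this eigenvalue vector lies in $\Lambda_R$. The paper cites its Theorem~3.1 for the first identity and phrases the second step as the relaxation from $\mathcal{C}_R$ to $\Lambda_R$, whereas you spell out both steps explicitly, but the argument is the same.
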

where $\Lambda_R$ is the set defined in section 4.
\begin{proof}
We showed in the proof of Theorem 3.1 that 
\begin{equation}
    S_{ii} = \frac{\Sigma_{ii}}{\Psi_{ii}}= C_{ii}^{-1}.
\end{equation}
It follows that ${\rm trace}(\mathbf{S}) = {\rm trace}(\mathbf{C}^{-1}) = \sum_i \lambda_i^{-1}$, where $\lambda_1,\ldots,\lambda_n$ are the eigenvalues of $\mathbf{C}$. The bound then follows from the relaxtion from the set $\mathcal{C}_R$ to the set $\Lambda_R$ in section~4.
\end{proof}

\subsection{Lower bound on $\text{trace}({\bf S})$}

The optimization implied by eq.~(\ref{eq:shrink-eig-bound}) is convex, since both the set $\Lambda_R$ and the objective function $\sum_i \lambda_i^{-1}$ are convex. In fact, this bound can be evaluated in closed form by using similar methods as in section 4.2.

  \begin{lemma} \label{lemma:trace-symmetry1}
  Let $\boldsymbol\lambda\in\Lambda_R$ be the solution that minimizes the left side of eq.~(\ref{eq:shrink-eig-bound}). Then $\lambda_i\!=\!\lambda_j$ whenever $1\!<\!i\!<\!j\!<\!n$.
\end{lemma}
\begin{proof}
This proof follows the same argument as the proof (by contradiction) for Lemma 4.3. 
Suppose there exists a solution with intermediate elements $\lambda_i$ and $\lambda_j$ that satisfy
$\lambda_1\! \geq\! \lambda_i\! >\! \lambda_j\! \geq\! \lambda_n$.
Consider the effect on this solution of a perturbation that adds some small amount $\delta\!>\!0$ to~$\lambda_j$ and subtracts the same amount from~$\lambda_i$. For sufficiently small~$\delta$, this perturbation will not leave the set~$\Lambda_R$; however, it will {\it diminish} the separation of~$\lambda_i$ from~$\lambda_j$. As a result the objective $\sum_k (1/\lambda_k)$ experiences a change
\begin{equation}
  g(\delta) = \frac{1}{\lambda_i-\delta} -\frac{1}{\lambda_i} +\frac{1}{\lambda_j+\delta} - \frac{1}{\lambda_j}.
  \end{equation}
Evaluating the derivative, we find $g'(0) = \lambda_i^{-2}\! -\! \lambda_j^{-2} < 0$, so that the objective function is decreased for some $\delta>0$. As before this yields a contradiction, because any solution must be maximal, and hence stationary (i.e., $g'(0)\!=\!0$), with respect to small perturbations.
\end{proof}
With the above lemma, the $n$-dimensional optimization over $\Lambda_R$ can be reduced to a one-dimensional optimization that can be solved in closed form. The methods are identical to those in the previous appendix.

First we rewrite the constraint, $\lambda_n \le \lambda_k \le R \lambda_n$, as
\begin{equation}  \label{eq:trace-constraint}
    \frac{n}{R(n - 1) + 1} \le \lambda_n \le \frac{n}{R  + n - 1}.
\end{equation}
%
Since the minimization problem is convex, a minima can be found at a stationary point of the objective function
%
\begin{equation}  \label{eq:trace-objective}
    \sum_{i = 1}^n \frac{1}{\lambda_i} = \frac{(n - 2)^2}{n - (1 + R) \lambda_n} + \frac{1}{\lambda_n} + \frac{1}{R \lambda_n},
\end{equation}
%
which now only depends on $\lambda_n$.
Differentiating and setting to 0, we obtain the root-finding problem,
\begin{equation}
    \left [R(n - 2)^2 - (1 + R)^2 \right] \lambda^2_n + 2n (1 + R) \lambda_n - n^2 = 0,
\end{equation}
%
which can be solved exactly.
It remains to check whether the roots violate the constraints in~\eqref{eq:trace-constraint}, and pick the non-offending root which maximizes the objective in~\eqref{eq:trace-objective}.
If both roots violate the constraints then, by convexity of the problem, the solution must lie at one of the boundary terms in~\eqref{eq:trace-constraint}.

\subsection{Upper bound on \text{trace}(S)}

A similar approach gives us an upper bound on $\text{trace}({\bf S})$.
In fact, we have already solved the problem of maximizing the right side of~\eqref{eq:shrink-eig-bound} when upper-bounding $\log |{\bf S}|$.
It remains to apply the same strategy.

\section{Tighter upper bound on entropy gap}
\label{app:KL}
In Proposition 4.1 we derived separate upper bounds on the individual terms $\log|\mathbf{S}|$ and $\log|\mathbf{C}|$. One upper bound on the entropy gap (or equivalently, on ${\rm KL}(q,p)$) is obtained simply by adding these separate bounds. However, a tighter upper bound is obtained by replacing the separate optimizations in Proposition 4.1 by a single joint optimization:
\begin{equation}
{\rm KL}(q,p)\ \leq\ 
  \frac{1}{2}\max_{\boldsymbol\lambda\in\Lambda_R}\left[
    n \log \frac{1}{n}
  \sum_{i=1}^n \lambda_i^{-1} + \sum_{i=1}^n \log \lambda_i\right].
\label{eq:combined-opt}
\end{equation}
In this appendix we sketch how to solve this optimization and evaluate this bound in closed form. The first step is to make the change of variables,
\begin{equation}
    \omega_i = \frac{\lambda_i^{-1}}{\sum_{j=1}^n \lambda_j^{-1}},
\end{equation}
and to translate the domain of optimization accordingly. Under this change of variables, the original domain $\Lambda_R$ in section 4 is mapped onto the set
\begin{equation}
    \Omega_R = \left\{\boldsymbol\omega\in \mathbb R_+^n\, |\,\omega_n\geq\ldots\geq\omega_1 = \frac{1}{R} \omega_n, \sum_{i=1}^n\omega_i = 1\right\}.
\end{equation}
Likewise, a little algebra shows that the optimization in eq.~(\ref{eq:combined-opt}) is equivalent to the following:
\begin{equation}
{\rm KL}(q,p)\ \leq\ 
  \frac{1}{2}\max_{\boldsymbol\omega\in\Omega_R}\left[
    \sum_{i=1}^n \log\frac{1}{\omega_i} - n\log n\right].
\label{eq:omega-opt}
\end{equation}
Now we can make a similar argument as in the proof of Lemma 4.2 to simplify this optimization.
\begin{lemma} 
\label{lemma:symmetry-omega}
  Let $\boldsymbol\omega\in\Omega_R$ be the solution that maximizes the right side of eq.~(\ref{eq:omega-opt}). Then at most one $\omega_i$ is not equal to either $\omega_1$ or~$\omega_n$.
\end{lemma}
\begin{proof}
We prove the lemma by contradiction. Suppose there exists a solution with intermediate elements $\omega_i$ and $\omega_j$ that satisfy
$\omega_n\! >\! \omega_i\! >\! \omega_j\! >\! \omega_1$.
Consider the effect on this solution of a perturbation that adds some small amount $\delta\!>\!0$ to $\omega_i$ and subtracts the same amount from $\omega_j$. Note that for sufficiently small~$\delta$, this perturbation will not leave the set~$\Omega_R$; however, it will {\it expand} the separation of $\omega_i$ from~$\omega_j$. As a result the objective in eq.~(\ref{eq:omega-opt}) changes by an amount
\begin{equation}
  f(\delta) = \frac{1}{2}\left[\log\frac{1}{\omega_i\!+\!\delta} - \log\frac{1}{\omega_i} + \log\frac{1}{\omega_j\!-\!\delta} - \log\frac{1}{\omega_j}\right].
\end{equation}
Next we evaluate the derivative $f'(\delta)$ at $\delta=0$; doing so we find $f'(0) = \omega_j^{-1}\! -\! \omega_i^{-1} > 0$, so that the objective is increased for some $\delta>0$. But this yields a contradiction, because any solution must be maximal, and hence stationary (i.e., $f'(0)\!=\!0$), with respect to small perturbations.
\end{proof}
With the above lemma, we can reduce the $n$-dimensional optimization over $\Omega_R$ to a one-dimensional optimization that can be solved in closed form; the methods are identical to those in the previous appendix.

In details, let $\omega_k$ be the one variable which (potentially) does not go to $\omega_1$ or $\omega_n$.
Given $\sum_{i = 1}^n \omega_i = 1$,
\begin{equation}
    \omega_k = 1 - (k - 1 + R(n -k)) \omega_1.
\end{equation}
%
The objective is then
\begin{eqnarray}
    \sum_{i = 1}^n \log \frac{1}{\omega_i} =&  - (k - 1) \log \omega_1 - (n - k) \log R\omega_1 \nonumber \\ & - \log (1 - [k - 1 + R(n -k) \omega_1]). \ 
\end{eqnarray}
%
Since we are trying to maximize a convex function, the solution does not lie at a stationary point, rather at a boundary set by the constraint, $\omega_1 \le \omega_k \le \omega_n$, or equivalently
\begin{equation}
    \frac{1}{k - 1 + R(n - k + 1} \le \omega_1 \le \frac{1}{k + R(n -k)}.
\end{equation}
%
It remains to test each candidate boundary for each choice of $k$ to obtain a maximizer.


